\documentclass{article} % For LaTeX2e
\usepackage{iclr2021_conference,times}

\usepackage{booktabs}
\usepackage{diagbox}
\usepackage{comment}
\usepackage{multirow}
\usepackage{subcaption}
\usepackage{caption}
\usepackage{wrapfig}
\usepackage{graphicx}

% Optional math commands from https://github.com/goodfeli/dlbook_notation.
%%%%% NEW MATH DEFINITIONS %%%%%

\usepackage{amsmath,amsfonts,bm,amssymb,mathtools,amsthm}
\usepackage{thm-restate}

\newtheorem{theorem}{Theorem}
\newtheorem{definition}{Definition}

% Mark sections of captions for referring to divisions of figures

\newcommand{\bb}[1]{{\mathbb{#1}}}

\newcommand{\diff}{\mathrm{d}}
% Highlight a newly defined term

% Figure reference, lower-case.

% Figure reference, capital. For start of sentence

% Section reference, lower-case.

% Section reference, capital.

% Reference to two sections.

% Reference to three sections.

% Reference to an equation, lower-case.
\def\eqref#1{equation~\ref{#1}}
% Reference to an equation, upper case
\def\Eqref#1{Equation~\ref{#1}}
% A raw reference to an equation---avoid using if possible

% Reference to a chapter, lower-case.

% Reference to an equation, upper case.

% Reference to a range of chapters

% Reference to an algorithm, lower-case.

% Reference to an algorithm, upper case.

% Reference to a part, lower case

% Reference to a part, upper case

\def\1{\bm{1}}

% Random variables

% rm is already a command, just don't name any random variables m

% Random vectors

% Elements of random vectors

% Random matrices

% Elements of random matrices

% Vectors

\def\vx{{\bm{x}}}
\def\vy{{\bm{y}}}
\def\vz{{\bm{z}}}

% Elements of vectors

% Matrix

% Tensor
\DeclareMathAlphabet{\mathsfit}{\encodingdefault}{\sfdefault}{m}{sl}
\SetMathAlphabet{\mathsfit}{bold}{\encodingdefault}{\sfdefault}{bx}{n}

% Graph

\def\gP{{\mathcal{P}}}

\def\gX{{\mathcal{X}}}

\def\gZ{{\mathcal{Z}}}

% Sets

% Don't use a set called E, because this would be the same as our symbol
% for expectation.

% Entries of a matrix

% entries of a tensor
% Same font as tensor, without \bm wrapper

% The true underlying data generating distribution
\newcommand{\pdata}{p_{\rm{data}}}
% The empirical distribution defined by the training set

% The model distribution

% Stochastic autoencoder distributions

 % Laplace distribution

\newcommand{\E}{\mathbb{E}}

\newcommand{\R}{\mathbb{R}}

% Wolfram Mathworld says $L^2$ is for function spaces and $\ell^2$ is for vectors
% But then they seem to use $L^2$ for vectors throughout the site, and so does
% wikipedia.

 % See usage in notation.tex. Chosen to match Daphne's book.

\DeclareMathOperator*{\argmax}{arg\,max}
\DeclareMathOperator*{\argmin}{arg\,min}

\newcommand\numberthis{\addtocounter{equation}{1}\tag{\theequation}}

\renewcommand{\emptyset}{\varnothing}

\newcommand{\supp}{\mathrm{supp}}

\usepackage[colorlinks=true,citecolor=brown,urlcolor=gray]{hyperref}
\usepackage{url}

% \title{Negative Data Augmentation \\ for Self-Supervised Learning}
\title{Negative Data Augmentation}

\newcommand{\ayush}[1]{ \color{blue} [Ayush: #1] \color{black}}
\newcommand{\burak}[1]{ \color{red} [Burak: #1] \color{black}}

\newcommand{\js}[1]{{\color{teal} [JS: #1]}}

% \newcommand{\kristy}[1]{ \color{cyan} \color{black}}
% \newcommand{\sj}[1]{ \color{magenta} \color{black}}
% \newcommand{\ayush}[1]{ \color{blue} \color{black}}
% \newcommand{\burak}[1]{ \color{red} \color{black}}
% \newcommand{\abhishek}[1]{ \color{purple}  \color{black}}
% \newcommand{\js}[1]{{\color{teal}}}
% \newcommand{\s}[1]{{\color{magenta}}}
% \newcommand{\se}[1]{{\color{magenta}}}

% Authors must not appear in the submitted version. They should be hidden
% as long as the \iclrfinalcopy macro remains commented out below.
% Non-anonymous submissions will be rejected without review.

% \author{Antiquus S.~Hippocampus, Natalia Cerebro \& Amelie P. Amygdale \thanks{ Use footnote for providing further information
% about author (webpage, alternative address)---\emph{not} for acknowledging
% funding agencies.  Funding acknowledgements go at the end of the paper.} \\
% Department of Computer Science\\
% Cranberry-Lemon University\\
% Pittsburgh, PA 15213, USA \\
% \texttt{\{hippo,brain,jen\}@cs.cranberry-lemon.edu} \\
% \And
% Ji Q. Ren \& Yevgeny LeNet \\
% Department of Computational Neuroscience \\
% University of the Witwatersrand \\
% Joburg, South Africa \\
% \texttt{\{robot,net\}@wits.ac.za} \\
% \AND
% Coauthor \\
% Affiliation \\
% Address \\
% \texttt{email}
% }
\author{%
  Abhishek Sinha$^1$\thanks{Equal Contribution} \\
  \And 
  Kumar Ayush$^1$\footnotemark[1] \\
  \And
  Jiaming Song$^1$\footnotemark[1] \\
  \And
  Burak Uzkent$^1$ \\
  \And
  Hongxia Jin$^2$ \\
  \And
  \hspace{6cm} Stefano Ermon$^1$ \\
  
  \\
  \normalfont {Department of Computer Science}$^1$\\
  \normalfont {Stanford University}\\
  \normalfont {\{a7b23, kayush, tsong, buzkent, ermon\}@stanford.edu}
  \\
  \\
  \normalfont {Samsung Research America}$^2$\\
  }

% The \author macro works with any number of authors. There are two commands
% used to separate the names and addresses of multiple authors: \And and \AND.
%
% Using \And between authors leaves it to \LaTeX{} to determine where to break
% the lines. Using \AND forces a linebreak at that point. So, if \LaTeX{}
% puts 3 of 4 authors names on the first line, and the last on the second
% line, try using \AND instead of \And before the third author name.

\iclrfinalcopy % Uncomment for camera-ready version, but NOT for submission.
\begin{document}

\maketitle

\begin{abstract}

Data augmentation is often used to enlarge datasets with synthetic samples generated in accordance with the underlying data distribution. To enable a wider range of augmentations, we explore \emph{negative} data augmentation strategies (NDA) that intentionally create out-of-distribution samples. We show that such negative out-of-distribution samples provide information on the support of the data distribution,  and 
can be leveraged for generative modeling and representation learning. We introduce a new GAN training objective where we use NDA as an additional source of synthetic data for the discriminator. We prove that under suitable conditions, optimizing the resulting objective still recovers the true data distribution but can directly bias the generator towards avoiding samples that lack the desired structure. Empirically, models trained with our method achieve improved conditional/unconditional image generation along with improved anomaly detection capabilities. Further, we incorporate the same negative data augmentation strategy in a 
contrastive learning framework for self-supervised representation learning on images and videos, achieving improved performance on downstream image classification, object detection, and action recognition tasks. These results suggest that prior knowledge on what does not constitute valid data is an effective form of weak supervision across a range of unsupervised learning tasks.
\end{abstract}

\section{Introduction}
Data augmentation strategies for synthesizing new data in a way that is consistent with an underlying task are extremely effective in both supervised and unsupervised learning~\citep{oord2018representation,zhang2016colorful,noroozi2016unsupervised,asano2019critical}. Because they operate at the level of samples, they can be combined with most learning algorithms.
They allow for the incorporation of prior knowledge (inductive bias) about properties of typical samples from the underlying data distribution % (in-distribution)
~\citep{jaiswal2018unsupervised,antoniou2017data}, 
e.g., by leveraging invariances to produce additional ``positive'' examples of how a task should be solved.
%e.g., that an image's label should be invariant to small color changes or shifts. 
%Because they operate at the level of samples, they are easy to incorporate in most learning algorithms.
%However, coming up with effective data augmentation strategies is task-dependent and generally non-trivial \kristy{cite} \burak{I think this statement gives the impression that we are proposing a solution to make them task-agnostic. We might better finish with inductive bias and carry on with introducing wider range of bias,}. 
%Methods for automatically discovering them are an active area of research~\citep{cubuk2019autoaugment}.

To enable users to specify an even wider range of inductive biases,  
%\kristy{there is a disconnect with the proposed problem above (coming up with data augmentation strategies is task-dependent/hard) and the solution (specifying a wider range of biases to choose from). should make this clear}, 
we propose to leverage an alternative
and complementary 
%\sj{unclear it's alternative and complementary to what?} 
source of prior knowledge that specifies how a task should \emph{not} be solved.
%about the underlying distribution
% intuitively corresponding to ways in which a task should \emph{not} be solved. 
%
%that is complementary to existing approaches. \kristy{what are these existing approaches and why was it necessary to look at complementary approaches? answering this Q will also clear up the above point as well as the problem statement in the abstract} 
We formalize this intuition by assuming access to a way of generating samples that are guaranteed to be out-of-support for the data distribution, %(out-of-distribution samples), 
which we call a \emph{Negative Data Augmentation} (NDA). 
\begin{comment}
While task-specific, we argue that NDA samples exist and are easy to define for common data modalities such as natural images, which are 
%believed to be 
supported on low-dimensional manifolds \citep{fefferman2016testing}.
%, where it is sufficient to create NDA samples by destroying spatial/structural coherence. 
Focusing on natural images and videos, we provide generic NDA strategies building on known transformations from the literature that intentionally destroy the spatial coherence of an image~\citep{noroozi2016unsupervised,devries2017improved,yun2019cutmix}, such as the Jigsaw puzzle transformation.
\end{comment}
% We then propose new methods to use NDA samples for training generative adversarial networks (GANs) and representation learning. 
%\kristy{up to this sentence, the flow of this paragraph is weak. it should be more like: (to address challenge X, we propose NDA as an alternative data augmentation strategy) -> (NDA consists of leveraging an alternative source of prior knowledge...) -> (to do so, we assume access to...). basically, the statement of what you're doing should come first, then followed by details. right now it's more like details first then what you are doing, which can be confusing for the reader to follow}
%Although in most cases we only care about performance for in-distribution samples,
Intuitively, negative out-of-distribution (OOD) samples can 
%still 
be leveraged as a useful inductive bias because they provide information about the support of the data distribution to be learned by the model.
For example, in a density estimation problem we can bias the model to avoid putting any probability mass in regions which we know a-priori should have zero probability. This can be an effective prior if the negative samples cover a sufficiently large area. 
%For example, when learning a generative model that should only generate images with 2 objects, we can bias the model to avoid producing 1 or 3 objects by introducing such data as NDA.
The best NDA candidates are ones that expose common pitfalls of existing models, such as prioritizing local structure over global structure~\citep{geirhos2018imagenet}; this motivates us to consider known transformations from the literature that intentionally destroy the spatial coherence of an image~\citep{noroozi2016unsupervised,devries2017improved,yun2019cutmix}, such as Jigsaw transforms. 
%We experiment with various transformations as both positive data augmentation (PDA) and negative data augmentation (NDA) and find that those which spatially corrupt the image prove the most effective as an NDA.
%\ayush{Added.} \burak{Maybe we can add here that in the exps. section we explore many augmentations as PDA and NDA and empirically prove our statement.}

Building on this intuition, we introduce a new GAN training objective where we use 
%out-of-distribution data obtained via
NDA as an additional source of fake data for the discriminator as shown in Fig.~\ref{fig:NDA_fwork}. 
Theoretically, we can show that if the NDA assumption  
%\kristy{if the procedure is valid? or the assumption?} 
is valid, 
%(i.e., the NDA and data supports are truly disjoint), 
optimizing this objective will still recover the data distribution in the limit of infinite data. 
However, in the finite data regime, there is a need to generalize beyond the empirical distribution~\citep{zhao2018bias}. By explicitly providing the discriminator with samples we want to avoid, we are able to bias the generator towards avoiding undesirable samples thus improving generation quality. 

%generalization of the dataset %\kristy{(confusing)}
%put probability mass where it shouldn't. 
%\sj{This seems strange, since the discriminator is trained on the generated fake samples. So any fake sample should theoretically be detected by the discriminator, and thus avoided. I think you might have intended to argue about the empirical weakness observed in our neurips paper, but I don't that flavor came out from the text. }
%
%This idea can be applied to any existing GAN.
%In practice, we propose to interpret known transformations from the literature that intentionally destroy the spatial coherence of an image~\cite{noroozi2016unsupervised,devries2017improved,yun2019cutmix}, such as the Jigsaw puzzle transformation, as NDAs. 
%and using the generic augmentations described above
%Combined with existing GANs, 
%we achieve new state of the art FID \cite{heusel2017gans} scores for conditional/unconditional image generation on CIFAR-10/100. 
%We further show that training a GAN with NDA modifies the behavior of the discriminator over OOD data, and yields improved performance in anomaly detection.
%
\begin{wrapfigure}{r}{0.33\textwidth}
\vspace{-12pt}
  \begin{center}
    \includegraphics[width=0.33\textwidth]{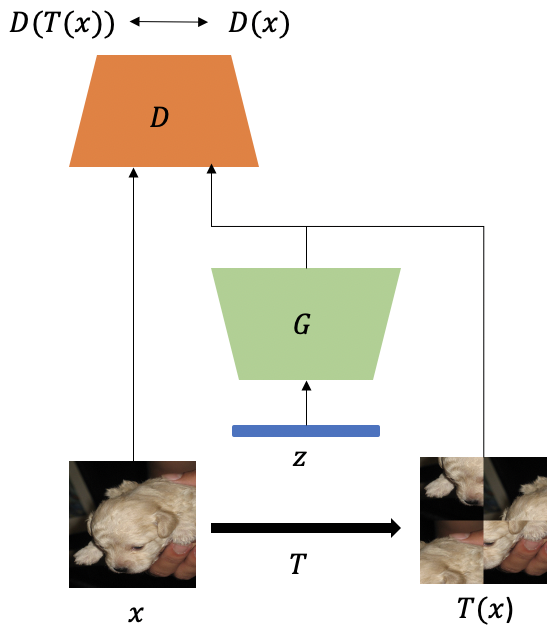}
  \end{center}
  \vspace{-10pt}
  \caption{Negative Data Augmentation for GANs.}
  \vspace{-13pt}
  \label{fig:NDA_fwork}
\end{wrapfigure}

Furthermore, we propose a way of leveraging NDA for unsupervised representation learning. We propose a new contrastive predictive coding~\citep{he2019momentum,han2019video} (CPC) objective that encourages the distribution of representations corresponding to in-support data to become disjoint from that of NDA data.  
%\kristy{what does it mean for representations to become disjoint? e.g. if i have real-valued vectors},
%As in the previous GAN example, enforcing  disjointedness will bias the learned representation in a certain way, potentially improving performance. 
%by preventing solutions with overlap we hope to 
%which is important in safety-critical applications such as anomaly detection. 
%\sj{Why is this beneficial?} 
Empirically, we show that applying NDA with our proposed transformations (e.g., forcing the representation of normal and jigsaw images to be disjoint)
%over existing contrastive learning approaches for images and videos 
improves performance in downstream tasks.
%such as image-classification, action-recognition, and anomaly detection. These results suggest that NDA has much potential to improve a variety of unsupervised learning techniques. 

\begin{comment}
Building on this intuition, we apply NDA to generative modeling and contrastive representation learning. 
In adversarial generative models, there is a need to generalize beyond the empirical distribution~\citep{zhao2018bias}, which is only determined implicitly by the model architecture. By explicitly providing NDA samples as an additional source of fake data to the discriminator, we are able to regularize the generator such that it avoids undesirable samples, thus improving the overall generation quality. 
Theoretically, we can show that if the NDA assumption  
%\kristy{if the procedure is valid? or the assumption?} 
is valid, 
%(i.e., the NDA and data supports are truly disjoint), 
optimizing this objective will still recover the data distribution in the limit of infinite data and computational resources. 
In contrastive representation learning, the objective regularizes the representations of samples in the dataset, but not that of samples out of the dataset. This may lead to undesired cases such as the representation of a data is very close to another representation that is not from the dataset.
We introduce NDA in order to alleviate this issue, where we encourage the representations of data in the dataset to be far from that of NDA samples. \burak{I am having difficulty understanding the last two sentences.}
% we use NDA as an additional source of fake data to the discriminator. %While we show theoretically that this does change the optimal solution under infinite data and modeling capacity, in the finite data regime, 

\end{comment}

With appropriately chosen NDA strategies, we obtain superior empirical performance on a variety of tasks, with almost no cost in computation. For generative modeling, models trained with NDA achieve better image generation, image translation and anomaly detection performance compared with the same model trained without NDA. Similar gains are observed on representation learning for images and videos over downstream tasks such as image classification, object detection and action recognition. These results suggest that NDA has much potential to improve a variety of self-supervised learning techniques.

\section{Negative Data Augmentation}
The input to most 
%supervised and unsupervised 
learning algorithms 
%(supervised and unsupervised) 
is a dataset of samples from an underlying data distribution $p_{data}$. While $p_{data}$ is unknown, learning algorithms always rely on prior knowledge about its properties (inductive biases \citep{wolpert1997no}), e.g., by using specific functional forms such as neural networks. Similarly, data augmentation strategies exploit known invariances of $p_{data}$, such as the conditional label distribution %$p_{data}(y|x)$ 
being invariant to 
semantic-preserving transformations.
%\abhishek{data augmentation} 
%small perturbations of $x$ (e.g., rotation, color shifts, etc.).

While typical data augmentation strategies exploit prior knowledge about what is in support of $\pdata$, 
in this paper, we propose to exploit prior knowledge about what is \emph{not} in the support of $\pdata$. This information is often available for common data modalities (e.g., natural images and videos) and is under-exploited by existing approaches. Specifically, we assume: (1) there exists an alternative distribution $\overline{p}$ such that its support is disjoint from that of $p_{data}$; and (2) access to a procedure to efficiently sample from $\overline{p}$. We emphasize $\overline{p}$ need not be explicitly defined (e.g., through an explicit density) -- it may be implicitly defined by a dataset or by a procedure 
that transforms
%to generate samples from $\overline{p}$ given access to 
samples from 
$\pdata$ into ones from $\overline{p}$ by suitably altering their structure. 

\begin{figure*}[!h]
\centering
    \includegraphics[width=0.8\textwidth]{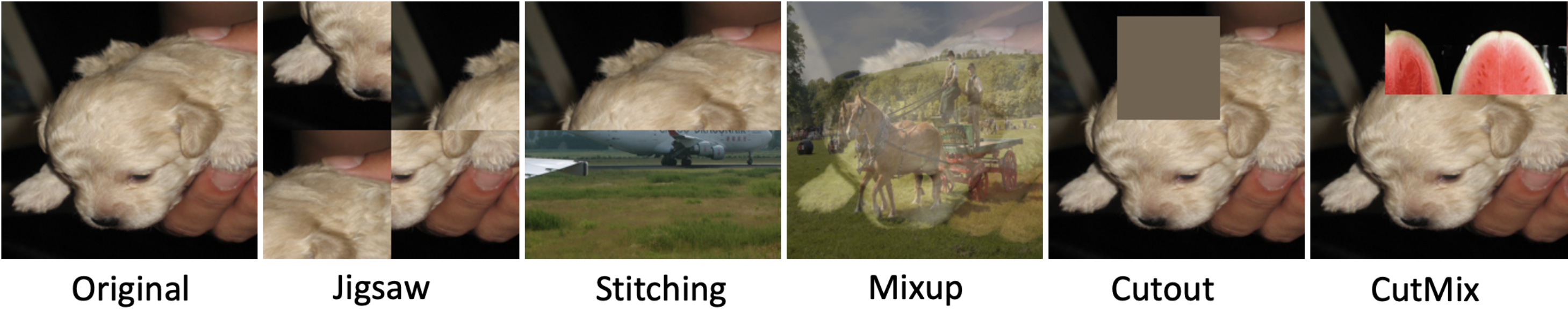}
    \caption{
    %Transformations used to perform negative data augmentation. 
    %In contrast to common positive data augmentations, negative augmentations produce out-of-distribution samples lacking the typical structure of natural images.
    Negative augmentations produce out-of-distribution samples lacking the typical structure of natural images; these negative samples can be used to inform a model on what it should \emph{not} learn. 
    }
    \label{fig:types}
\end{figure*}

% According to the manifold hypothesis \citep{fefferman2016testing}, natural images lie on manifolds that are low-dimensional: $\pdata$ is supported on a low-dimensional manifold of the ambient (pixel) space. This suggests that many negative data augmentation strategies exist. 

 Analogous to typical data augmentations, NDA strategies are by definition domain and task specific. 
 In this paper, we focus on natural images and videos, and
%  and show that existing image transformations can be leveraged as NDA for generative modeling and representation learning. We
 leave the application to other domains (such as natural language processing) 
%\kristy{such as what?} 
as future work.
How do we select a good NDA strategy?
According to the manifold hypothesis \citep{fefferman2016testing}, natural images lie on low-dimensional manifolds: $p_{data}$ is supported on a low-dimensional manifold of the ambient (pixel) space. This suggests that many negative data augmentation strategies exist. Indeed, sampling random noise is in most cases a valid NDA. However, while this prior is  generic, it is not very informative, and this NDA will likely be ineffective for most learning problems. Intuitively, NDA is informative if its support is close (in a suitable metric) to that of $p_{data}$, while being disjoint. These negative samples will provide information on the ``boundary'' of the support of $p_{data}$,
%\sj{it seems that you are not providing information about the boundary of the manifold (which might even be empty)}, 
which we will show is helpful in several learning problems. 
%Leveraging this intuition, we describe several image transformations from the literature that can be viewed as generic NDAs over natural images of size $H \times W$
%that we will use for generative modeling~\cite{goodfellow2014generative} and representation learning~\cite{oord2018representation}.
In most of our tasks, the images are processed by convolutional neural networks (CNNs) that are good at processing local features but not necessarily global features~\citep{geirhos2018imagenet}. Therefore, we may consider NDA examples to be ones that preserve local features (``informative'') and break global features, so that it forces the CNNs to learn global features (by realizing NDAs are different from real data).

%We emphasize that whether or not these transformations are NDAs depends on  $p_{data}$ (they are task dependent). Because $p_{data}$ is typically defined implicitly through a dataset, these conditions cannot be checked -- we must rely on prior knowledge to decide whether they can be used or not. One of our contributions is to formalize this prior mathematically in Definition \ref{def:nda}, provide new learning algorithms that can take advantage of this assumption and rigorously analyze their properties.

% more ``informative'' in ``our tasks'', our NDA examples are thus spatially corrupted versions of real images, so that most local features are preserved, and a discriminator cannot only rely on local artifacts to distinguish real and NDA samples.
% Indeed, sampling random noise is in most cases a valid NDA. However, while this prior is  generic, it is not very informative, and this NDA will likely be ineffective for most learning problems. Intuitively, NDA is ``informative'' if its support is close (in a suitable metric) to that of $\pdata$, while being disjoint. These negative samples will provide information on the ``boundary'' of the support of $\pdata$,
% %\sj{it seems that you are not providing information about the boundary of the manifold (which might even be empty)}, 
% which we will show is helpful in several learning problems. 
Leveraging this intuition, we show several image transformations from the literature that can be viewed as generic NDAs over natural images in Figure \ref{fig:types}, that we will use for generative modeling and representation learning in the following sections. Details about these transformations can be found in Appendix \ref{sec:transformation}.

\section{NDA for Generative Adversarial Networks}
% \section{Negative Data-Augmentation for Generative Adversarial Networks}

\begin{figure*}[!h]
\centering
    \includegraphics[width=0.4\textwidth]{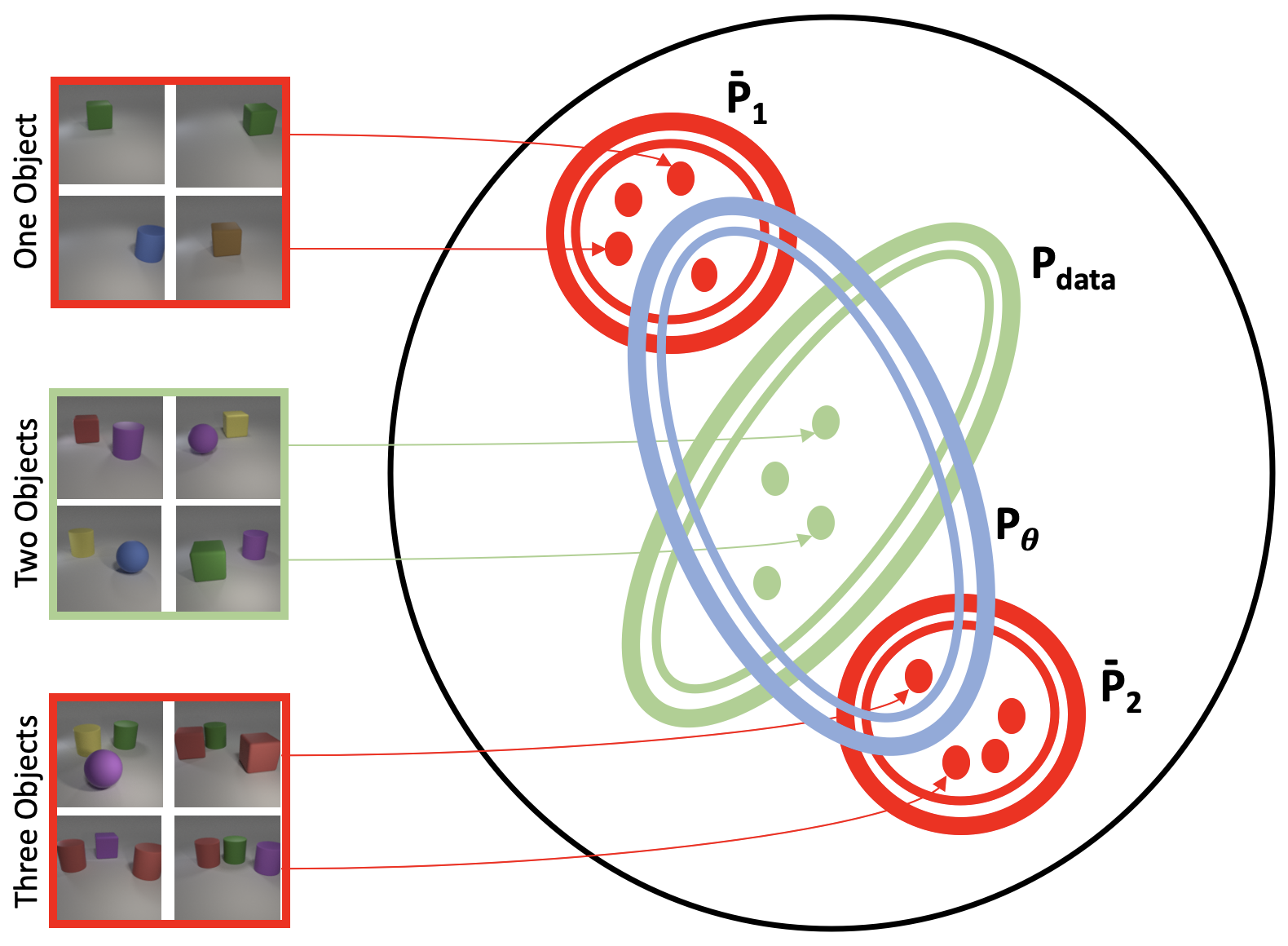}\hspace{2em}
    \includegraphics[width=0.4\textwidth]{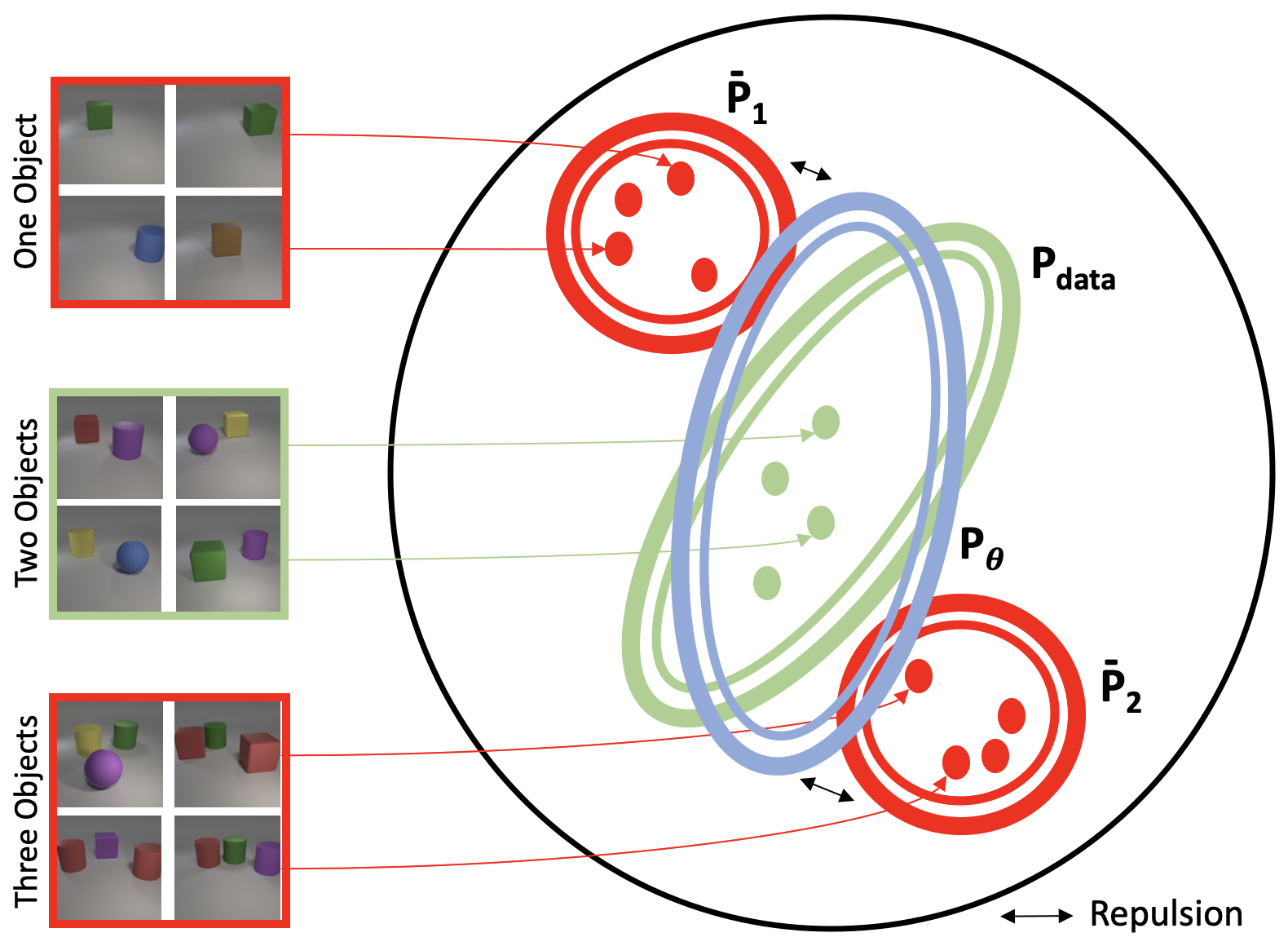}
    \caption{Schematic overview of our NDA framework. \textbf{Left}: In the absence of NDA, the support of a generative model $P_{\theta}$ (blue oval) learned from samples (green dots) may ``over-generalize'' and include samples from $\overline{P_1}$ or $\overline{P_2}$. \textbf{Right}: With NDA, the learned distribution $P_{\theta}$ becomes disjoint from NDA distributions $\overline{P_1}$ and $\overline{P_2}$, thus pushing  $P_{\theta}$ closer to the true data distribution $p_{data}$ (green oval).
    As long as the prior is consistent, i.e. the supports of $\overline{P_1}$ and $\overline{P_2}$ are truly disjoint from $p_{data}$, the best fit distribution in the infinite data regime does not change. 
    %while still allowing the network to recover the true distribution, $P_{data}$, under suitable conditions. We directly provide supervision (a) on what should not be generated by GANs, or (b) to learn disjoint representations for OOD data in unsupervised representation learning, by explicitly specifying such samples as \textit{negatively augmented data}. \ayush{Updated Figure}
    }
    \label{fig:overview}
\end{figure*}

%Adversarial Networks}. 
In GANs, we are interested in learning a generative model $G_\theta$ from samples drawn from some data distribution $\pdata$~\citep{goodfellow2014generative}.
%GANs have become one of the most important methods to learn generative models~\cite{goodfellow2014generative}. 
%The idea behind GAN is the mini-max game. 
GANs use a binary classifier, the so-called discriminator $D_\phi$, to distinguish real data from generated (fake) samples. The generator $G_\theta$ is trained 
%to confuse the discriminator to classify the generated samples as the real ones 
%By having the generator and discriminator competing with each other adversarially, they are able to improve themselves.  
%The end goal is to have the generator capture the data distribution. 
%Let $G, D$ denote the generator and discriminator of the GAN. Further let $p_d, p_g$ denote the real data and generated distribution respectively. Then the GAN objective is defined as the following - 
via the following mini-max objective that performs variational Jensen-Shannon divergence minimization:
\begin{gather}
    \min_{G_\theta \in \gP(\gX)} \max_{D_\phi} L_{\mathrm{JS}}(G_\theta, D_\phi) \quad \text{where} \\
    L_{\mathrm{JS}}(G_\theta, D_\phi) = \E_{\vx \sim \pdata}\left[\log(D_\phi(\vx))\right] + \E_{\vx \sim  G_\theta}\left[\log(1 - D_\phi(\vx))\right]
\end{gather}
This is a special case to the more general variational $f$-divergence minimization objective~\citep{nowozin2016f}. The optimal $D_\phi$ for any $G_\theta$ is $(\pdata / G_\theta) / (1 + \pdata / G_\theta)$, %\kristy{dumb q: isn't the optimal D $\frac{p_d}{p_d + p_g}?$}, 
so the discriminator can serve as a density ratio estimator between $\pdata$ and $G_\theta$. 

% \s{mention f-divergence so it connects with what we have below; make it look more like the objective in (3) so the connection is clear; role of the discriminator is to estimate the density ratio between pdata and G}
% $$min\, max_{G,D} E_{x \sim p_d}\left[log(D(x))\right] + E_{x \sim p_g}\left[log(1 - D(x))\right]$$

%From the perspective of the discriminator, samples from $\pdata$ are positive and samples from $G_\theta$ are negative;
With sufficiently expressive models and infinite capacity,  $G_\theta$ will match $\pdata$. 
%at the equilibrium. 
%\sj{Seems imprecise}
In practice, however, we have access to finite datasets and limited model capacity. 
This means that the generator needs to generalize beyond the empirical distribution, which is challenging because the number of possible discrete distributions scale \emph{doubly exponentially} w.r.t. to the data dimension.
% , suggesting extreme data requirements. 
Hence, as studied in~\citep{zhao2018bias}, the role of the inductive bias is critical.
%However, in finite data and finite computation scenarios the generative model does not always generalize the data samples as desired. 
For example, \cite{zhao2018bias} report that when trained on images containing 2 objects only, GANs and other generative models can sometimes 
``generalize''
by generating images with 1 or 3 objects (which were never seen in the training set).
The generalization behavior -- which may or may not be desirable -- is determined by factors such as network architectures, hyperparameters, etc., and is difficult to characterize analytically. 

Here we propose to bias the learning process
%Without explicit supervision, it is impossible to 
by directly specifying what the generator should \textit{not} generate through NDA. 
We consider an adversarial game based on the following objective:
\begin{align}
\label{eq:ndagan}
    \min_{G_\theta \in \gP(\gX)} \max_{D_\phi} L_{\mathrm{JS}}(\lambda G_\theta + (1 - \lambda) \overline{P}, D_\phi)
\end{align}
where the negative samples are generated from a mixture of $G_\theta$ (the generator distribution) and $\overline{P}$ (the NDA distribution); the mixture weights are controlled by the hyperparameter $\lambda$. 
% which we adapt dif our experiments\abhishek{we vary the hyperparameter $\lambda$ across different tasks}. 
Intuitively, this can help addresses the above ``over-generalization'' issue, as we can directly provide supervision on what should not be generated and thus guide the support of $G_\theta$ (see Figure~\ref{fig:overview}) %\sj{This claim seems too ambitious}
. For instance, in the object count example above, 
%\burak{Guessing that we refer to appendix}, 
we can empirically prevent the model from generating images with an undesired number of objects (see Appendix Section %\ref{sec:numerosity} 
A for experimental results on this task). 
%\kristy{q: (may not be relevant for paper) what makes something the "right" or "wrong" number of objects to generate? especially if you only have a finite sample that you've observed, when do you cut off the number and say that you've observed all objects (e.g. 2), or say that there may be more objects, you just haven't observed them? this decision seems relevant to how you're going to decide what constitutes a "negative sample" for NDA}\abhishek{i don't think we need to adress this}

% This addresses the above ``over-generalization'' issue, as it allows us to explicitly specify which samples we do not wish the generator to produce, thus guiding the support of $G_\theta$ and leading to large empirical gains in practice for the problem setting in ~\cite{zhao2018bias} (see Appendix Section \ref{sec:numerosity}).

% For example, to address the over-generalization issue in~\cite{zhao2018bias}, we can directly provide supervision on what should not be generated and guide the support of $G_\theta$, leading to large empirical gains in practice (see Appendix Section \ref{sec:numerosity}). 

% \s{using the more generic NDA from section xx, we can directly provide supervision on what should not be generated, guiding the support of G. this leads to large empirical gains in practice.}
%\js{if we have space, we can put it here. if not go to appendix}

In addition, the introduction of NDA samples will not affect the solution of the original GAN objective in the limit.
In the following theorem, we show that given infinite training data and infinite capacity discriminators and generators, using NDA will not affect the optimal solution to the generator, \textit{i.e.} the generator will still recover the true data distribution. 
%This shows that our prior over the support of $\pdata$ through NDA is consistent with the ground truth data generating process, justifying the use of NDA in practice.
% \s{our prior on the support of pdata (through NDA) is consistent with the ground truth data generating process}

\begin{restatable}{theorem}{thmgan}
Let $\overline{P} \in \gP(\gX)$ be any distribution over $\gX$ with disjoint support than $\pdata$, \textit{i.e.}, such that $\supp(\pdata) \cap \supp(\overline{P}) = \varnothing$. Let $D_\phi: \gX \to \bb{R}$ be the set of all discriminators over $\gX$, $f: \R_{\geq 0} \to \R$ be a convex, semi-continuous function such that $f(1) = 0$, $f^\star$ be the convex conjugate of $f$, $f'$ its derivative, and $G_\theta$ be a distribution with sample space $\gX$. Then $\forall \lambda \in (0, 1]$, we have:
\begin{align}
    \argmin_{G_\theta \in \gP(\gX)} \max_{D_\phi: \gX \to \R} L_f(G_\theta, D_\phi) = \argmin_{G_\theta \in \gP(\gX)} \max_{D_\phi: \gX \to \R} L_f(\lambda G_\theta + (1 - \lambda) \overline{P}, D_\phi) = \pdata
\end{align}
% \sj{The above equation doesn't type check. I think you mean $\max_{D_\phi}$}\js{thanks}
where $L_f(Q, D_\phi) = \bb{E}_{\vx \sim \pdata}[D_\phi(\vx)] - \bb{E}_{\vx \sim Q}[f^\star(D_\phi(\vx))]$ is the objective for $f$-GAN~\citep{nowozin2016f}. However, the optimal discriminators are different for the two objectives:
\begin{gather}
    \argmax_{D_\phi: \gX \to \R} L_f(G_\theta, D_\phi) = f'(\pdata / G_\theta) \\
        \argmax_{D_\phi: \gX \to \R} L_f(\lambda G_\theta + (1 - \lambda) \overline{P}, D_\phi) = f'(\pdata / (\lambda G_\theta + (1 - \lambda) \overline{P})) \label{eq:dr-nda-gan}
\end{gather}
\end{restatable}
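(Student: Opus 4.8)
The plan is to first collapse the inner maximization over the discriminator into an $f$-divergence via Fenchel duality — which simultaneously yields the two optimal-discriminator formulas — and then to analyze the two resulting outer minimizations over $G_\theta$ separately. For the left objective this is the textbook $f$-GAN computation; for the right objective the disjoint-support hypothesis is what makes it work.

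\emph{Step 1: eliminate the discriminator.} Let $Q$ denote the distribution handed to the discriminator ($Q=G_\theta$ in the left objective, $Q=\lambda G_\theta+(1-\lambda)\overline{P}$ in the right one). Since $D_\phi$ ranges over all measurable maps $\gX\to\R$, I would optimize $L_f(Q,D_\phi)=\E_{\vx\sim\pdata}[D_\phi(\vx)]-\E_{\vx\sim Q}[f^\star(D_\phi(\vx))]$ pointwise in $\vx$: for each $\vx$, Fenchel--Young gives $\sup_v\{\pdata(\vx)v-Q(\vx)f^\star(v)\}=Q(\vx)\,f(\pdata(\vx)/Q(\vx))$ with maximizer $v=f'(\pdata(\vx)/Q(\vx))$. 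Hence $\max_{D_\phi}L_f(Q,D_\phi)=\int_\gX Q\,f(\pdata/Q)\,\diff\vx=\DF(\pdata\,\|\,Q)$, attained at $D_\phi=f'(\pdata/Q)$; plugging in the two choices of $Q$ gives the two displayed optimal discriminators and reduces both outer problems to minimizing an $f$-divergence (this is essentially the variational representation of $f$-divergences of \citep{nowozin2016f}).

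\emph{Step 2: the two minimizations.} For the left objective, $\argmin_{G_\theta}\DF(\pdata\,\|\,G_\theta)=\pdata$ follows from $\DF\ge 0$, $f(1)=0$, and strict convexity of $f$ at $1$ for uniqueness (true in the Jensen--Shannon case). For the right objective I fix $\lambda\in(0,1)$ (for $\lambda=1$ the two objectives coincide and there is nothing to prove) and set $Q_G\defeq\lambda G_\theta+(1-\lambda)\overline{P}$. Let $A=\supp(\pdata)$; the hypothesis $\supp(\overline{P})\cap A=\varnothing$ gives $\overline{P}(A)=0$, so $Q_G=\lambda G_\theta$ on $A$ while $\pdata=0$ off $A$, and splitting the defining integral over $A$ and $A^c$ yields
\begin{align*}
\DF(\pdata\,\|\,Q_G)=\int_A \lambda G_\theta\, f\!\left(\frac{\pdata}{\lambda G_\theta}\right)\diff\vx \;+\; f(0)\,\bigl(1-\lambda\,G_\theta(A)\bigr),
\end{align*}
with $f(0)=\lim_{t\to0^+}f(t)$ (finite in the Jensen--Shannon case; for $f$ with $f(0)=\infty$ the claim is only meaningful at $\lambda=1$). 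Writing $\mu=G_\theta(A)\in[0,1]$ and applying Jensen's inequality on $A$ against the probability measure $\lambda G_\theta/(\lambda\mu)$, the first term is $\ge\lambda\mu\,f(1/(\lambda\mu))$ with equality iff $G_\theta|_A=\mu\,\pdata$, so $\DF(\pdata\,\|\,Q_G)\ge h(\mu)\defeq\lambda\mu\,f(1/(\lambda\mu))+f(0)(1-\lambda\mu)$. Substituting $t=1/(\lambda\mu)\in[1/\lambda,\infty)$ rewrites $h$ as $f(0)+(f(t)-f(0))/t$; since $t\mapsto(f(t)-f(0))/t$ is the slope of the chord of $f$ joining $0$ to $t$, which is nondecreasing in $t>0$ by convexity (strictly, under strict convexity), $h$ is minimized at $t=1/\lambda$, i.e.\ $\mu=1$. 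Combined with the Jensen equality condition this forces $G_\theta(A)=1$ and $G_\theta|_A=\pdata$, i.e.\ $G_\theta=\pdata$, which attains the bound since $Q_{\pdata}=\lambda\pdata+(1-\lambda)\overline{P}$ gives exactly $h(1)$; this identifies the minimizer.

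\emph{Where the difficulty is.} Step 1 and the left half of Step 2 are bookkeeping over the known $f$-GAN variational principle. The crux is the right objective: the split over $\supp(\pdata)$ and its complement (legitimate precisely because the two supports are disjoint), followed by Jensen, collapses the problem to the scalar function $h(\mu)$, and the one genuinely useful fact is that convexity of $f$ makes the chord slope $(f(t)-f(0))/t$ monotone, which is what pins the minimizer at $\mu=1$. The points needing care are measure-theoretic: generators that fail to dominate $\pdata$ on $A$ (handled by the convention $0\cdot f(p/0)=p\,f'(\infty)$, which only raises the bound) and the behaviour of $f(0)$, which is the single place the Jensen--Shannon structure is actually invoked.
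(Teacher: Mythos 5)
Your proof is correct and rests on the same two pillars as the paper's: (i) the variational/Fenchel representation collapses the inner $\max_{D_\phi}$ into an $f$-divergence with optimal discriminator $f'(\pdata/Q)$ (the paper cites this as Lemma 1 of Nguyen et al.\ instead of rederiving Fenchel--Young pointwise, but it is the same fact), and (ii) Jensen's inequality plus the disjoint-support hypothesis pin the minimizer of $\DF(\pdata\,\|\,\lambda G_\theta+(1-\lambda)\overline{P})$ to $G_\theta=\pdata$. Where you diverge is in how the $f$-divergence is decomposed and where Jensen is applied. The paper splits the integrand by \emph{mixture component} ($\lambda q$ vs.\ $(1-\lambda)\bar p$) and then applies Jensen over \emph{all of} $\gX$ against the full probability measure $q$; because the inner expectation $\int q\cdot p/(\lambda q+(1-\lambda)\bar p)$ evaluates to exactly $1/\lambda$ regardless of how $q$ distributes its mass, the constant lower bound $\lambda f(1/\lambda)+(1-\lambda)f(0)$ falls out in one stroke. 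You instead split by the \emph{support of $\pdata$} ($A$ vs.\ $A^c$), apply Jensen only on $A$ against the renormalized measure $\lambda G_\theta/(\lambda\mu)$, and consequently pick up an extra scalar parameter $\mu=G_\theta(A)$ whose optimality at $\mu=1$ you must argue separately via the chord-slope monotonicity of convex $f$. Both routes are valid; the paper's global Jensen is slicker because the normalization $\E_q[g]=1/\lambda$ automatically absorbs any $G_\theta$-mass outside $\supp(\pdata)$, whereas your version makes this mass explicit and therefore requires the extra chord-slope step. A small benefit of your accounting is that it surfaces two caveats the paper leaves implicit: the result is vacuous for $f$ with $f(0)=\infty$ (e.g.\ reverse KL, where the mixture divergence is $+\infty$ for all $G_\theta$ when $\lambda<1$), and one must adopt a convention for the $0\cdot f(p/0)$ edge case when $G_\theta$ fails to dominate $\pdata$. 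Both proofs also tacitly use strict convexity of $f$ at the relevant point for the Jensen equality condition to force $G_\theta=\pdata$ uniquely; worth flagging since the hypotheses only assume convexity.
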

% \sj{The above equation has unbound variables}
\begin{proof}
See Appendix \ref{sec:gan-theorem}.
\end{proof}

% Even though the solutions are the same in the limit, in practice, we have to deal with finite training data, where the density ratio between empirical distributions are not well defined \se{?}. 
%\kristy{what is $f'$? also why is the first $f'$ with respect to a single $G_\theta(\vx)$ in the denominator while the second $f'$ just has $G_\theta$?}
%\abhishek{removed $G_{\theta}(x)$}
The above theorem shows that in the limit of infinite data and computation, adding NDA changes the optimal discriminator solution but not the optimal generator.
In practice, when dealing with finite data, existing regularization techniques such as weight decay and spectral normalization~\citep{miyato2018spectral} allow potentially many solutions that achieve the same objective value.
% Multiple solutions from the model family can fit the empirical distribution, by estimating the density ratio of the  finite training samples \ayush{Doubt in technical correctness of the statement}.
% The density ratio of the finite training samples can be fit via multiple solutions. 
The introduction of NDA samples allows us to filter out certain solutions by providing additional inductive bias through OOD samples.  %\abhishek{Please verify}
In fact, the optimal
%solution for the 
discriminator 
%, on the other hand, 
will reflect the density ratio between $\pdata$ and $\lambda G_\theta  + (1 - \lambda) \overline{P}$ (see Eq.(\ref{eq:dr-nda-gan})),  
%\sj{typo in eq?}\abhishek{changed it, its fine now I think}
and its values will be higher for samples from $\pdata$ compared to those from  $\overline{P}$. As we will show in Section \ref{section:anomaly}, a discriminator trained with this objective and suitable NDA performs better than relevant baselines for other downstream tasks such as anomaly detection. 
\section{NDA for Constrastive Representation Learning}

% \s{using a classifier to estimate a density ratio is not only useful for f-divergences (generative modeling) but also for estimating mutual information. 
% mutual information is used in representation learning to find a representation that is maximally informative ..
% }
% \js{currently the notation for random variable, samples, distribution and pdf are a bit messy, need to unify notations.}
%% 

Using a classifier to estimate a density ratio is useful not only for estimating $f$-divergences (%in likelihood-free generative modeling, 
as in the previous section) but also for estimating mutual information between two random variables.
% , which is the KL divergence between their joint and product of marginal distributions. \kristy{this part doesn't seem super necessary}
In representation learning, mutual information (MI) maximization is often employed to learn 
%considered as a principle to learn 
compact yet useful  representations 
%that are maximally informative 
of the data,
% \sj{This statement doesn't type check}, 
allowing one to perform downstream tasks efficiently~\citep{tishby2015deep,nguyen2008estimating,poole2019on,oord2018representation}. %While the connection between learning good representations may only be loosely connected to MI \citep{tschannen2019mutual}, % has questioned the connection of mutual information with downstream task performance, 
%methods inspired by mutual information have been shown to be useful \citep{bachman2019learning, oord2018representation} and, thus we stick with it. \abhishek{Added this line, @Jiaming, please verify}
%\se{cite some information bottleneck papers and recent stuff}
Here, we show that NDA samples are also beneficial for representation learning.  

In contrastive representation learning (such as CPC~\citep{oord2018representation}), %\abhishek{should be replaced with contrastive learning?}, 
the goal is to learn a mapping $h_\theta(\vx): \gX \to \gP(\gZ)$ that maps a datapoint $\vx$ to some distribution over the representation space $\gZ$; once the network $h_\theta$ is learned,  representations are obtained by sampling from $\vz \sim h_\theta(\vx)$.
% \se{this could use an extra sentence of explanation about how you get the representation from this} 
CPC \textit{maximizes} the following objective:
\begin{align}
    I_{\mathrm{CPC}}(h_\theta, g_\phi) := \E_{\vx \sim \pdata(\vx), \vz \sim h_\theta(\vx), \widehat{\vz_i} \sim p_\theta(\vz)}\Bigg[\log{\frac{n g_\phi(\vx,\vz)}{ g_\phi(\vx,\vz) + \sum_{j=1}^{n-1} g_\phi(\vx,\widehat{\vz_j})}}\Bigg] %\leq I(X, Z)
\end{align}
% \ayush{Shouldn't the above equation be this} \js{this differs by a $\log n$ term, but it is needed in order to be a tight lower bound (i.e. if there is no log n then the bound is very loose)}\ayush{Shouldn't the denominator contain $g_\phi(\vx,\vz)$?}\js{sorry my bad, that was a typo}
% \begin{align}
%     I_{\mathrm{CPC}}(h_\theta, g_\phi) := \E_{\vx \sim \pdata(\vx), \vz \sim h_\theta(\vx), \widehat{\vz_i} \sim p_\theta(\vz)}\Bigg[\log{ \frac{g_\phi(\vx,\vz)}{g_\phi(\vx,\vz)+ \sum_{j=1}^{n-1} g_\phi(\vx,\widehat{\vz_i})}}\Bigg] \leq I(X, Z)
% \end{align}
where $p_\theta(\vz) = \int h_\theta(\vz | \vx) \pdata(\vx) \diff \vx$ is the marginal  distribution of the representations associated with $\pdata$. 
% and $I(X, Z)$ denotes the mutual information between $X$ and $Z$, the random variables corresponding to data and representations, respectively.
Intuitively, the CPC objective involves an $n$-class classification problem  where $g_\phi$ attempts to identify a matching pair (i.e. $(\vx, \vz)$) sampled from the joint distribution from the $(n-1)$ non-matching pairs (i.e. $(\vx, \widehat{\vz}_j)$) sampled from the product of marginals distribution. Note that $g_\phi$ plays the role of a discriminator/critic, and is implicitly estimating a density ratio. %\se{which plays the role of a discriminator/critic} ;
As $n \to \infty$, the optimal $g_\phi$ corresponds to an un-normalized density ratio between the joint distribution and the product of marginals, and the CPC objective matches its upper bound which is the mutual information between $X$ and $Z$~\citep{poole2019variational,song2019understanding}.
% \se{mention the discriminator recovers the density ratio}
However, this objective is no longer able to control the representations for data that are out of support of $\pdata$, so there is a risk that the representations are similar between $\pdata$ samples and out-of-distribution ones. 

To mitigate this issue, we propose to use NDA in the CPC objective, where we additionally introduce a batch of NDA samples, for each positive sample:
\begin{align}
    \overline{I_{\mathrm{CPC}}}(h_\theta, g_\phi) := \E\Bigg[ \log{\frac{(n + m) g_\phi(\vx,\vz)}{g_\phi(\vx,\vz) + \sum_{j=1}^{n-1} g_\phi(\vx, \widehat{\vz_j}) + \sum_{k=1}^m g_\phi(\vx, \overline{\vz_k})}}\Bigg]
\end{align}
% \burak{we should divide by n+m, right?}\js{we can change this depending on how many samples you used for NDA in practice}
% \burak{Our current objective in the experiments also utilize $g_\phi(\vz_j,\overline{\vz}_k$) in denominator. @Kumar, can you confirm it?} \ayush{DPC w/ NDA is based on the above objective. However, in moco we also utilize $g_\phi(\vz_j,\overline{\vz}_k$) to increase the number of nda negatives compared to the queue negatives (because of large queue size). Also, I think $g_\phi(\vz_j,\overline{\vz}_k$) can be subsumed in $g_\phi(\vx_i, \overline{\vz}_k)$ leaving the equation unchanged.}\js{i think x should be z in the actual implementation; i wrote it this way because it is technically more general and is the original CPC notations}\ayush{Agreed.}
where the expectation is taken over $\vx \sim \pdata(\vx), \vz \sim h_\theta(\vx),\widehat{\vz_i} \sim p_\theta(\vz)$, %\se{no i?} 
$\overline{\vx}_k \sim \overline{p}$ (NDA distribution), $\overline{
    \vz}_k \sim h_\theta(\overline{\vx}_k)$ for all $k \in [m]$. 
    Here, the behavior of $h_\theta(\vx)$ when $\vx$ is NDA is optimized explicitly, allowing us to impose additional constraints to the NDA representations. This corresponds to a more challenging classification problem (compared to basic CPC) that encourages learning more informative representations.
    % of NDA samples. 
% In the example in Figure~\ref{fig:nda-cpc-demo}, by introducing NDA samples (such as ``red 2'' (\includegraphics[height=1\fontcharht\font`\B]{}) and ``green 1'' (\includegraphics[height=1\fontcharht\font`\B]{})) we can define a more challenging classification problem (compared to basic CPC) that enables learning of more informative features, as 1 bit of representation is not enough to distinguish both negative sample from $\pdata$ as well as the two NDA samples. 
% We illustrate this idea in Figure \ref{fig:nda-cpc-demo}.
    In the following theorem, we show that the proposed objective encourages the representations for NDA samples to become disjoint from the representations for $\pdata$ samples, \textit{i.e.} NDA samples and $\pdata$ samples do not map to the same representation. %\kristy{what is the relationship between $n$ and $m$?}\abhishek{can mention that choosing m was just empirical, no relation as such}\js{removed i and n}

% We propose to use the following objective, where we also include the representations 
% under interventions:
% \begin{align}
%     \overline{I_{\mathrm{CPC}}}(g_\phi, T) := \E_{\vx_i \sim P(X), \vz_i \sim P_\theta(Z|X), \overline{
%     \vz}_i \sim P_\theta(Z|\dop_T(X))}\Bigg[\frac{1}{n} \sum_{i=1}^n \log{\frac{g_\phi(\vx_i,\vz_i)}{\frac{1}{2n} \sum_{j=1}^n g_\phi(\vx_i, \vz_j) + \frac{1}{2n} \sum_{j=1}^n g_\phi(\vx_i, \overline{\vz}_j)}}\Bigg]
% \end{align}
% \js{check if this reflects our implementation}
% where $T: \gX \to \gX$ denotes a transformation that maps data samples to non-data samples.

\begin{theorem}
\label{thm:cpc-informal}
(Informal) The optimal solution to $h_\theta$ in the NDA-CPC objective maps the representations of data samples and NDA samples to disjoint regions.
\end{theorem}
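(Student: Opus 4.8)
The plan is to follow the standard analysis of InfoNCE-style objectives \citep{oord2018representation,poole2019variational,song2019understanding}, adapted to the extra block of $m$ NDA negatives, and to show that the optimized objective decomposes into the ordinary mutual-information term plus a divergence between the two representation marginals whose maximizer is exactly the claimed disjointness.

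First I would fix $h_\theta$ and write $p_\theta(\vz)=\int h_\theta(\vz\mid\vx)\,\pdata(\vx)\,\diff\vx$ and $\overline{p}_\theta(\vz)=\int h_\theta(\vz\mid\vx)\,\overline{p}(\vx)\,\diff\vx$ for the representation marginals induced by real and by NDA data. In $\overline{I_{\mathrm{CPC}}}$ the $n+m-1$ ``distractor'' latents are exchangeable draws ($n-1$ from $p_\theta$, $m$ from $\overline{p}_\theta$), hence effectively i.i.d.\ from the mixture $q_\theta \defeq \alpha\,p_\theta+(1-\alpha)\,\overline{p}_\theta$ with $\alpha=(n-1)/(n+m-1)$. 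Reading the softmax as an $(n+m)$-way ``which slot holds the positive'' classification problem and maximizing over a sufficiently expressive critic, the Bayes-optimal $g_\phi^\star$ is proportional to the density ratio $h_\theta(\vz\mid\vx)/q_\theta(\vz)$ (an arbitrary $\vx$-only factor drops out) --- exactly as in plain CPC but with the negative-sampling law replaced by $q_\theta$.

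Next, substituting $g_\phi^\star$ and taking $n,m\to\infty$ with $\alpha$ fixed (the same regime in which plain CPC becomes tight), the $n+m-1$ distractor terms in the denominator average to $\E_{\vz'\sim q_\theta}[h_\theta(\vz'\mid\vx)/q_\theta(\vz')]=1$, so the ratio inside the logarithm tends to $g_\phi^\star(\vx,\vz)$ and the objective tends to $\E_{\vx\sim\pdata,\,\vz\sim h_\theta(\vx)}\!\big[\log\frac{h_\theta(\vz\mid\vx)}{q_\theta(\vz)}\big]$. Splitting the logarithm through $p_\theta$ yields the key identity
\begin{align*}
\overline{I_{\mathrm{CPC}}}(h_\theta,g_\phi^\star)\;=\;I_{h_\theta}(X;Z)\;+\;\KL\!\big(p_\theta\,\big\|\,\alpha\,p_\theta+(1-\alpha)\,\overline{p}_\theta\big),
\end{align*}
i.e.\ the usual CPC mutual information plus a penalty that depends only on the two representation marginals (for finite $n,m$ the equality becomes a lower bound that is tight in the limit, which is all an ``informal'' statement needs).

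Finally I would analyze the penalty. Since $\alpha p_\theta+(1-\alpha)\overline{p}_\theta\ge\alpha p_\theta$, it is bounded by $\KL(p_\theta\|\alpha p_\theta)=-\log\alpha$, and
\begin{align*}
-\log\alpha-\KL\!\big(p_\theta\,\big\|\,q_\theta\big)=\E_{\vz\sim p_\theta}\!\Big[\log\!\Big(1+\tfrac{1-\alpha}{\alpha}\tfrac{\overline{p}_\theta(\vz)}{p_\theta(\vz)}\Big)\Big]\ge 0,
\end{align*}
with equality iff $\overline{p}_\theta(\vz)=0$ for $p_\theta$-a.e.\ $\vz$, i.e.\ iff $\supp(p_\theta)\cap\supp(\overline{p}_\theta)=\varnothing$. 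Because the mutual-information term is untouched by how $h_\theta$ acts off $\supp(\pdata)$ --- where the NDA samples live, by the NDA assumption --- an optimal $h_\theta$ can and must push the penalty to its maximum, which is precisely the statement that the representations of $\pdata$ samples and of NDA samples occupy disjoint regions. The main obstacle is the usual InfoNCE subtlety: justifying the optimal-critic substitution together with the large-sample concentration, and arguing that maximizing the penalty costs no mutual information; the latter is exactly why the theorem is stated informally, as it uses the mild assumption that $h_\theta$ is expressive enough to, say, append a coordinate recording whether $\vx\in\supp(\pdata)$, which separates the marginals while leaving $I_{h_\theta}(X;Z)$ unchanged.
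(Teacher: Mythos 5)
Your proof is correct in spirit but takes a genuinely different route from the paper's. The paper argues by a direct exchange/contradiction at finite $n,m$: it assumes a maximizer $\hat h$ whose data and NDA representations overlap, constructs a competitor $\hat h'$ that agrees with $\hat h$ on $\supp(\pdata)$ but maps NDA inputs to a disjoint region of $\gZ$, pairs it with a hand-crafted critic $\hat g'$ that is zero off the data-representation region, and observes that the NDA terms in the softmax denominator then vanish, strictly increasing the objective --- a contradiction. You instead pass to the $n,m\to\infty$ regime, substitute the Bayes-optimal critic $h_\theta(\vz\mid\vx)/q_\theta(\vz)$ with $q_\theta=\alpha p_\theta+(1-\alpha)\overline p_\theta$, and obtain the decomposition $\overline{I_{\mathrm{CPC}}}=I_{h_\theta}(X;Z)+\KL\bigl(p_\theta\,\Vert\,q_\theta\bigr)$, after which the conclusion follows from the fact that the $\KL$ term is capped at $-\log\alpha$ with equality iff $\supp(p_\theta)\cap\supp(\overline p_\theta)=\varnothing$. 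Both arguments hinge on the same decoupling observation --- the mutual-information part depends on $h_\theta$ only through its action on $\supp(\pdata)$, so its behavior on $\supp(\overline p)$ can be altered freely to separate the two marginals --- but your route yields a more interpretable byproduct, exposing NDA-CPC as ordinary CPC plus an explicit representation-separation penalty. The trade-off, which you flag yourself, is that your derivation is for the $n,m\to\infty$ limit while Theorem 2 is stated for the finite-$n,m$ objective; a maximizer of the limit need not coincide with a maximizer at finite $n,m$, so closing the argument rigorously would require an additional step (e.g., observing that at any fixed $n,m$ the objective with an optimal critic is still monotone in the mass of $\overline p_\theta$ on $\supp(p_\theta)$, or falling back on the paper's finite-sample exchange). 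For an explicitly informal theorem this is an acceptable gap, and the decomposition you derive is a nice addition to the paper's more elementary proof.
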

\begin{proof}
See Appendix \ref{sec:rep-theory} for a detailed statement and proof.
\end{proof}

\section{NDA-GAN Experiments}
In this section we report experiments with different types of NDA for image generation. Additional details about the network architectures and hyperparameters can be found in Appendix \ref{app:archit}.

% \subsection{Image Generation}
\begin{wrapfigure}{r}{0.30\textwidth}
\vspace{-20pt}
  \begin{center}
    \includegraphics[width=0.30\textwidth]{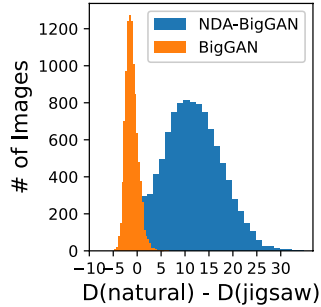}
  \end{center}
  \vspace{-10pt}
  \caption{Histogram of difference in the discriminator output for a real image and it's Jigsaw version.}
  \vspace{-5pt}
  \label{fig:diff}
\end{wrapfigure}

\textbf{Unconditional Image Generation.} We conduct experiments on various datasets using the BigGAN architecture~\citep{brock2018large} for unconditional image generation\footnote{We feed a single label to all images to make the architecture suitable for unconditional generation.}. We first explore various image transformations 
from the literature to evaluate which ones are effective as NDA. For each transformation, we evaluate its performance as NDA (training as in Eq. \ref{eq:ndagan}) and as a traditional data augmentation strategy, where we enlarge the training set 
%of real images with synthetic ones generated 
by applying the transformation to real images (denoted PDA for positive data augmentation).
%as both PDA and NDA. 
Table \ref{tab:pdavsnda} shows the FID scores for different types of transformations as PDA/NDA. The results suggest that transformations that spatially corrupt the image are strong NDA candidates. It can be seen that Random Horizontal Flip is not effective as an NDA; this is because flipping does not spatially corrupt the image but is rather a semantic preserving transformation, hence the NDA distribution $\overline{P}$ is not disjoint from $p_{data}$. On the contrary, it is reasonable to assume that if an image is likely under $p_{data}$, its flipped variant should also be likely. This is confirmed by the effectiveness of this strategy as PDA.
%We find that there are transformations like Gaussian Noise which work neither as a PDA nor as NDA. Gaussian noise preserves the shape but deteriorates the texture to some extent which might be the reason that it worsens the performance when used as a PDA but has no effect when used as an NDA because the transformation is not informative. 

% Also, due to \textit{``no free lunch theorems''} it is basically impossible to further restrict NDA to a prior that is universally good / informative.
% To be more ``informative'' in ``our tasks'', our NDA examples are thus spatially corrupted versions of real images, so that most local features are preserved, and a discriminator cannot only rely on local artifacts to distinguish real and NDA samples.

%\burak{There is still some disconnect in transition.}
We believe spatially corrupted negatives perform well as NDA in that they push the 
discriminator to focus on global features instead of local ones (e.g., texture). 
%generator to learn global features to fail the discriminator learning from the local features. 
We confirm this by plotting the histogram of differences in the discriminator output for a real image and it's Jigsaw version as shown in Fig.~\ref{fig:diff}.
We show that the difference is \textit{(a)} centered close to zero for normal BigGAN (so \textit{without NDA training, the discriminator cannot distinguish real and Jigsaw samples well}), and \textit{(b)} centered at a positive number (logit 10) for our method (NDA-BigGAN). Following our findings, in our remaining experiments we use Jigsaw, Cutout, Stitch, Mixup and Cutmix as they achieve significant improvements when used as NDA for unconditional image generation on CIFAR-10.
%\ayush{Added few lines. Probably needs more elaboration}\burak{There were some papers with similar findings, it might be better to cite them here. We can also say: because of the results above, we will use Jigsaw, CutOut, Mixup, CutMix for the rest of our experiments.} 

Table \ref{table:FID_uncond} shows the FID scores for BigGAN when trained with five types of negative data augmentation on four different benchmarks. Almost all the NDA augmentations improve the baseline across datasets.
%In fact, our FID score of \textbf{12.61} on CIFAR-10 for unconditional image generation using Jigsaw NDA is a new SOTA. 
For all the datasets except CIFAR-100, $\lambda=0.25$, whereas for CIFAR-100 it is 0.5. We show the effect of $\lambda$ on CIFAR-10 performance in Appendix \ref{app:lambda}. We additionally performed an experiment using a mixture of augmentation policy. The results (FID 16.24) were better than the baseline method (18.64) but not as good as using a single strategy.
%\kristy{not sure if this is mentioned in the appendix/somewhere else, but were results fairly robust across $\lambda$s? i'd be curious how we should go about choosing $\lambda$ (e.g. if there exists some heuristic)}\abhishek{addressed it}
% Jigsaw seems to be the best performing NDA across datasets.
% The effect of the hyperparameter $\lambda$ for the Jigsaw NDA is shown in Table \ref{table:alpha_beta}. \se{this table could be put inline. a table with a single row is weird..}
% \se{table ordering is weird/non consecutive}

\begin{table}[]
\caption[]{FID scores over CIFAR-10 using different transformations as PDA and NDA in BigGAN. The results indicate that some transformations yield better results when used as NDA. The common feature of such transformations is they all spatially corrupt the images. 
%\burak{Gaussian noise does not fall into this criteria.}\abhishek{gaussian noise doesn't improve the performance so its okay to talk about this}\burak{Sure, it is interesting that it does worse when used as PDA. Also, normal in the table is not descriptive. Maybe w/o Aug.?}
}
\resizebox{\textwidth}{!}{
\begin{tabular}{@{}l|ll|ll|ll|ll|ll|ll|ll|ll@{}}
\toprule
 \multirow{2}{*}{\textbf{w/o Aug.}} & \multicolumn{2}{c}{\textbf{Jigsaw}} & \multicolumn{2}{c}{\textbf{Cutout}} & \multicolumn{2}{c}{\textbf{Stitch}} & \multicolumn{2}{c}{\textbf{Mixup}} & \multicolumn{2}{c}{\textbf{Cutmix}} & \multicolumn{2}{c}{\textbf{Random Crop}} & \multicolumn{2}{c}{\textbf{Random Flip}} & \multicolumn{2}{c}{\textbf{Gaussian}} \\ \cmidrule{2-17}
 & \textbf{PDA} & \textbf{NDA} & \textbf{PDA} & \textbf{NDA} & \textbf{PDA} & \textbf{NDA} & \textbf{PDA} & \textbf{NDA} & \textbf{PDA} & \textbf{NDA} &  \textbf{PDA} & \textbf{NDA} & \textbf{PDA} & \textbf{NDA} & \textbf{PDA} & \textbf{NDA} \\
 \midrule
18.64 & 98.09 & 12.61 & 79.72 & 14.69 & 108.69 & 13.97 & 70.64 & 17.29 & 90.81 & 15.01 & 20.02 & 15.05  & 16.65 & 124.32 & 44.41 & 18.72  \\
\bottomrule
 \end{tabular}}
 \label{tab:pdavsnda}
\end{table}

\textbf{Conditional Image Generation.}
We also investigate the benefits of NDA in conditional image generation using BigGAN. The results are shown in Table \ref{table:FID_cond}. In this setting as well, NDA gives a significant boost over the baseline model.
%, resulting in another state-of-the-art for conditional image generation on CIFAR-10. 
We again use $\lambda= 0.25$ for CIFAR-10 and $\lambda= 0.5$ for CIFAR-100. For both unconditional and conditional setups we find the Jigsaw and Stitching augmentations to achieve a better FID score than the other augmentations. 

\begin{table}[!t]
  \caption{Comparison of FID scores of different types of NDA for unconditional image generation on various datasets. The numbers in bracket represent the corresponding image resolution in pixels. Jigsaw consistently achieves the \textbf{best} or \textcolor{teal}{second best} result.}
  \label{table:FID_uncond}
  \centering
  \resizebox{0.95\textwidth}{!}{
  \begin{tabular}{llllllll}
    \toprule
    {} & \textbf{BigGAN} & \textbf{Jigsaw} & \textbf{Stitching} & \textbf{Mixup} & \textbf{Cutout}& \textbf{Cutmix} & \textbf{CR-BigGAN} \\
    \midrule
    \textbf{CIFAR-10 (32)} & 18.64 & \textbf{12.61} & \textcolor{teal}{13.97} & 17.29 & 14.69 & 15.01 & 14.56\\
    \textbf{CIFAR-100 (32)} & 22.19 & \textbf{19.72} & 20.99 & 22.21 & 22.08 & \textcolor{teal}{20.78} & -- \\
    \textbf{CelebA (64)} & 38.14 & \textcolor{teal}{37.24} & \textbf{37.17} & 37.51 & 37.39 & 37.46 & --\\
    \textbf{STL10 (32)} & 26.80 &  \textbf{23.94} & 26.08 & \textcolor{teal}{24.45} & 24.91 & 25.34 & -- \\
    \bottomrule
  \end{tabular}}
\end{table}

\begin{table}[!h]
%\begin{minipage}{0.6\hsize}
  \caption[]{FID scores for conditional image generation using different NDAs.\footnotemark}  
  \label{table:FID_cond}
  \centering
  \resizebox{0.9\textwidth}{!}{
  \begin{tabular}{lllllllll}
    \toprule
    {} & \textbf{BigGAN} & \textbf{Jigsaw} & \textbf{Stitching} & \textbf{Mixup} & \textbf{Cutout} & \textbf{Cutmix} & \textbf{CR-BigGAN} \\
    \midrule
    \textbf{C-10} & 11.51 & \textbf{9.42} & \textcolor{teal}{9.47} & 13.87 & 10.52 & 10.3 & 11.48 \\
    \textbf{C-100} & 15.04 & 14.12 & \textbf{13.90} & 15.27 & 14.21 & \textcolor{teal}{13.99} & -- \\
    \bottomrule
  \end{tabular}}
%\end{minipage}
%\hfill
% \begin{minipage}{0.36\hsize}
%   \caption{Comparison with the state-of-the-art.}
%   \label{table:FID_sota}
%   \centering
%   \resizebox{\textwidth}{!}{
%   \begin{tabular}{lllllll}
%     \toprule
%     {} & \textbf{NDA} & \textbf{CR-BigGAN} \\
%     \midrule
%      \begin{tabular}[c]{@{}c@{}}\textbf{C-10} (\textbf{Cond.})\end{tabular} & \textbf{9.42} & 11.48\\
%     \begin{tabular}[c]{@{}c@{}}\textbf{C-10} (\textbf{UnCond.})\end{tabular} & \textbf{12.61} & 14.56 \\
%     \bottomrule
%   \end{tabular}}
% \end{minipage}
\end{table}
% \kristy{table 3 needs fixing, as well as formatting for caption}
\footnotetext{We use a \href{https://github.com/ajbrock/BigGAN-PyTorch}{PyTorch} code for BigGAN. The number reported in~\citet{brock2018large} for C-10 is 14.73.}

\textbf{Image Translation.} Next, we apply the NDA method to image translation. In particular, we use the Pix2Pix model~\citep{isola2017image} that can perform image-to-image translation using GANs provided paired training data. Here, the generator is conditioned on an image $\mathcal{I}$, and the discriminator takes as input the concatenation of generated/real image and $\mathcal{I}$.
We use Pix2Pix for semantic segmentation on Cityscapes dataset~\citep{cordts2016cityscapes} (i.e. photos $\rightarrow$ labels). Table \ref{tab:pix2pix_nda} shows the quantitative gains obtained by using Jigsaw NDA\footnote{We use the official \href{https://github.com/junyanz/pytorch-CycleGAN-and-pix2pix}{PyTorch} implementation and show the best results. % after extensive hyperparameter search. 
} 
while Figure \ref{fig:pix2pix_nda} in Appendix \ref{sec:pix2pix} highlights the qualitative improvements. The NDA-Pix2Pix model avoids noisy segmentation on objects including buildings and trees.

% We integrate NDA into the Pix2Pix model to better synthesize images. 
\begin{comment}
\begin{figure}[!h]
    \begin{minipage}{0.5\hsize}
\includegraphics[width=0.99\hsize]{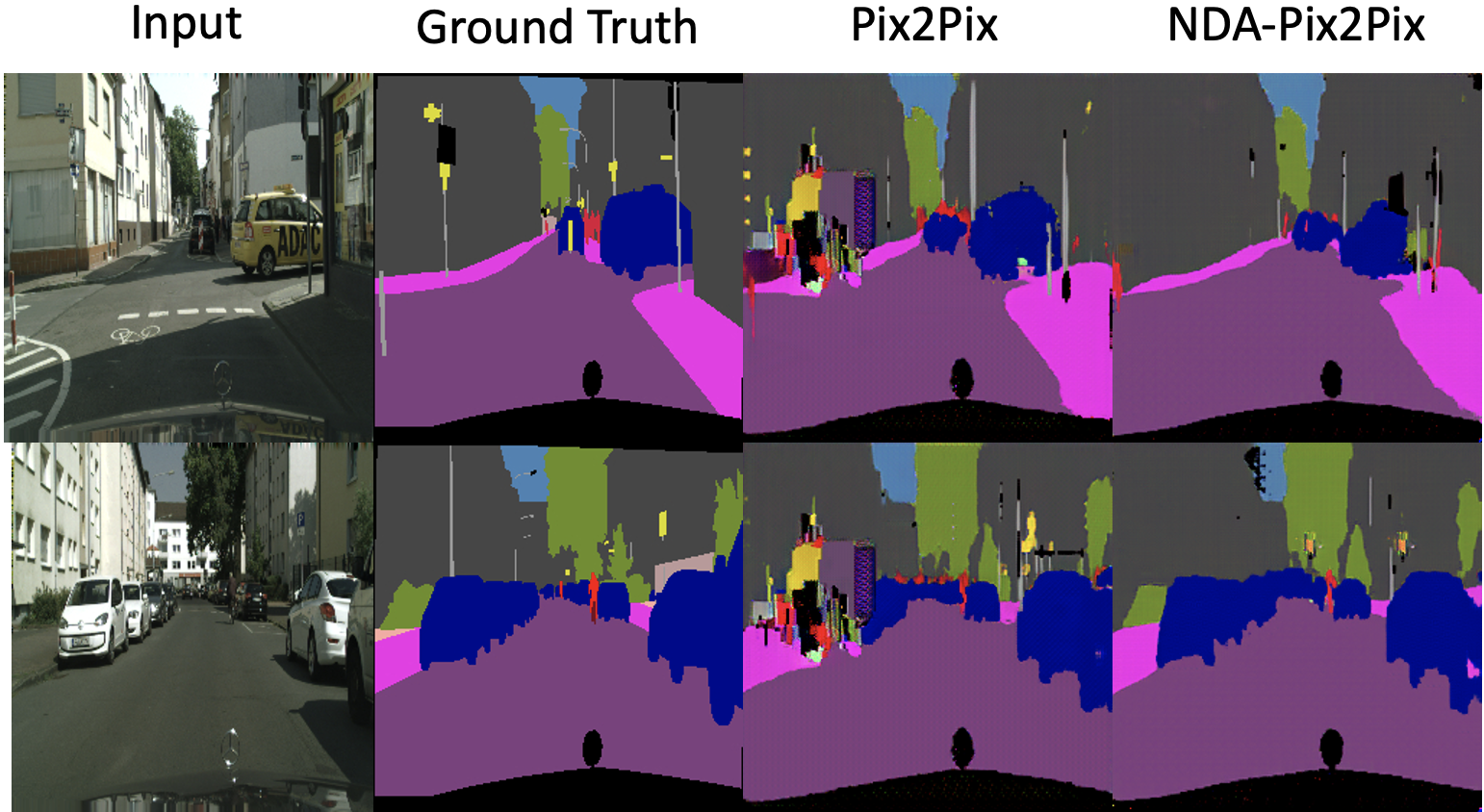}
  \captionof{figure}{Qualitative results on Cityscapes.}
  \label{fig:pix2pix_nda}
\end{minipage}
    \hfill
\begin{minipage}{0.48\hsize}
\resizebox{\textwidth}{!}{
\begin{tabular}{llll}
    \toprule
    \textbf{Metric} & \textbf{Pp. acc.} & \textbf{Pc. acc.} & \textbf{mIOU}\\
    \midrule
    \begin{tabular}[c]{@{}c@{}}Pix2Pix \\ (cGAN)\end{tabular} & 0.80 & 0.24 & 0.27 \\
    \begin{tabular}[c]{@{}l@{}}NDA-Pix2Pix \\ (cGAN)\end{tabular} & \textbf{0.84} & \textbf{0.34} & \textbf{0.28}  \\
    \midrule
   \begin{tabular}[c]{@{}l@{}}Pix2Pix \\ (L1+cGAN)\end{tabular} &
    0.72 & 0.23 & 0.18\\
    \begin{tabular}[c]{@{}l@{}}NDA-Pix2Pix \\ (L1+cGAN)\end{tabular} &
    \textbf{0.75} & \textbf{0.28} & \textbf{0.22}\\
    \bottomrule
  \end{tabular}}
\captionof{table}{Quantitative results on CityScapes, using per pixel (Pp.), per class (Pc.) accuracies and mean Intersection over Union (mIOU)}
    \label{tab:pix2pix_nda}
\end{minipage}
\end{figure}
\end{comment}

\begin{table}[!ht]
\begin{minipage}{0.48\hsize}
  \caption{Results on CityScapes, using per \\ pixel accuracy (Pp.), per class accuracy (Pc.) and mean Intersection over Union (mIOU). We compare Pix2Pix and its NDA version.}
  \label{tab:pix2pix_nda}
  \centering
  \resizebox{0.75\textwidth}{!}{
  \begin{tabular}{l|lll}
    \toprule
    \textbf{Metric} & \textbf{Pp.} & \textbf{Pc.} & \textbf{mIOU}\\
    \midrule
    \begin{tabular}[c]{@{}c@{}}Pix2Pix \\ (cGAN)\end{tabular} & 0.80 & 0.24 & 0.27 \\
    \begin{tabular}[c]{@{}l@{}}NDA\\ (cGAN)\end{tabular} & \textbf{0.84} & \textbf{0.34} & \textbf{0.28}  \\
    \midrule
   \begin{tabular}[c]{@{}l@{}}Pix2Pix \\ (L1+cGAN)\end{tabular} &
    0.72 & 0.23 & 0.18\\
    \begin{tabular}[c]{@{}l@{}}NDA\\ (L1+cGAN)\end{tabular} &
    \textbf{0.75} & \textbf{0.28} & \textbf{0.22}\\
    \bottomrule
  \end{tabular}}
\end{minipage}
\hfill
\begin{minipage}{0.50\hsize}
  \caption{AUROC scores for different OOD \\ datasets. OOD-1 contains different datasets, while OOD-2 contains the set of 19 different corruptions in CIFAR-10-C~\citep{hendrycks2018benchmarking} (the average score is reported). 
%   PixelCNN++ and Glow numbers are from \citep{du2019implicit}.
  }
  \label{table:auroc}
  \centering
  \resizebox{\textwidth}{!}{
  \begin{tabular}{l|l|lll}
    \toprule
    % {} & {} & \multicolumn{3}{c}{Generative Models}  \\
    % \midrule
    & & BigGAN & Jigsaw & EBM\\
    \midrule
    %~\citep{cimpoi14describing}~\citep{salimans2017pixelcnn++}~\citep{kingma2018glow}~\citep{du2019implicit}~\citep{he2019momentum}
    \multirow{6}{*}{OOD-1} & DTD & 0.70 & 0.69 & 0.48 \\
    % Uniform & 0.0 & 0.32 & 1.0 & 1.0 & 1.0\\
    & SVHN & 0.75 & 0.61 & 0.63 \\
    & Places-365 & 0.35 & 0.58 & 0.68 \\
    & TinyImageNet & 0.40 & 0.62 & 0.67\\
    & CIFAR-100 & 0.63 & 0.64 & 0.50\\
    \cmidrule{2-5}
    & Average & 0.57 & \textbf{0.63} & 0.59\\
    \midrule
    OOD-2 & CIFAR-10-C & 0.56 & \textbf{0.63} & 0.60\\
    \bottomrule
  \end{tabular}
%     \begin{tabular}{l|l|lllll}
%     \toprule
%     {} & {} & \multicolumn{5}{c}{Generative Models}  \\
%     \midrule
%     & & BigGAN & Jigsaw & PixelCNN++ & Glow & EBM\\
%     \midrule
%     %~\citep{cimpoi14describing}~\citep{salimans2017pixelcnn++}~\citep{kingma2018glow}~\citep{du2019implicit}~\citep{he2019momentum}
%     \multirow{6}{*}{OOD-1} & DTD & 0.70 & 0.69 & 0.33 & 0.27 & 0.48 \\
%     % Uniform & 0.0 & 0.32 & 1.0 & 1.0 & 1.0\\
%     & SVHN & 0.75 & 0.61 & 0.32 & 0.24 & 0.63 \\
%     & Places-365 & 0.35 & 0.58 & - & - & 0.68 \\
%     & TinyImageNet & 0.40 & 0.62 & - & - & 0.67\\
%     & CIFAR-100 & 0.63 & 0.64 & - & - & 0.50\\
%     \cmidrule{2-7}
%     & Average & 0.57 & \textbf{0.63} & - & - &  0.59\\
%     \midrule
%     OOD-2 & CIFAR-10-C & 0.56 & \textbf{0.63} & - & - & 0.60\\
%     \bottomrule
%   \end{tabular}
  }
\end{minipage}
\end{table}

% \subsection{Anomaly Detection} 
\textbf{Anomaly Detection.}
\label{section:anomaly}
% In this section, we investigate anomaly detection capabilities of the discriminator of BigGAN. We use output of the discriminator for anomaly detection. 
As another added benefit of NDA for GANs, we utilize the output scores of the BigGAN discriminator for anomaly detection. We experiment with 2 different types of OOD datasets.
The first set consists of SVHN~\citep{netzer2011reading}, DTD~\citep{cimpoi14describing}, Places-365~\citep{zhou2017places}, TinyImageNet, and CIFAR-100 as the OOD datapoints following the protocol in \citep{du2019implicit,hendrycks2018deep}. 
We train BigGAN w/ and w/o Jigsaw NDA on the train set of CIFAR-10 and then use the output value of discriminator to classify the test set of CIFAR-10 (not anomalous) and different OOD datapoints (anomalous) as anomalous or not. We use the AUROC metric as proposed in~\citep{hendrycks2016baseline} to evaluate the anomaly detection performance. Table~\ref{table:auroc} compares the performance of NDA with a likelihood based model (Energy Based Models (EBM~\citep{du2019implicit}).
% (PixelCNN++~\citep{salimans2017pixelcnn++}, Glow~\citep{kingma2018glow}) and Energy Based Models (EBM~\citep{du2019implicit}). 
Results show that Jigsaw NDA performs much better than baseline BigGAN and other generative models. We did not include other NDAs as Jigsaw achieved the best results.

We consider the extreme corruptions in CIFAR-10-C~\citep{hendrycks2018benchmarking} as the second set of  OOD datasets. It consists of 19 different corruptions, each having 5 different levels of severity. We only consider the corruption of highest severity for our experiment, as these constitute a significant shift from the true data distribution. Averaged over all the 19 different corruptions, the AUROC score for the normal BigGAN is \textbf{0.56}, whereas the BigGAN trained with Jigsaw NDA achieves \textbf{0.63}.
The histogram of difference in discriminator's output for clean and OOD samples are shown in Figure \ref{fig:anomaly} 
in the appendix. 
High difference values imply that the Jigsaw NDA is better at distinguishing OOD samples than the normal BigGAN. 

\section{Representation Learning %on Videos and Images 
using Contrastive Loss and NDA}
\textbf{Unsupervised Learning on Images.}
% \begin{wraptable}
%   \begin{tabular}{ll}
%     column1a & column2a \\
%     column1b & column2b \\
%     column1c & column2c \\
% \vspace{3pt}
%   \caption{Unsupervised learning experiments on Images.}
%   \end{tabular}
% \label{tab:moco_exps}
% \end{wraptable}
In this section, we perform experiments on three benchmarks: (a) CIFAR10 (C10), (b) CIFAR100 (C100), and (c) ImageNet-100~\citep{deng2009imagenet} to show the benefits of NDA on representation learning with the contrastive loss function. In our experiments, we use the momentum contrast method~\citep{he2019momentum}, \emph{MoCo-V2}, as it is currently the state-of-the-art model on unsupervised learning on ImageNet. For C10 and C100, we train the MoCo-V2 model for unsupervised learning (w/ and w/o NDA)  for 1000 epochs. On the other hand, for ImageNet-100, we train the MoCo-V2 model (w/ and w/o NDA) for 200 epochs. Additional hyperparameter details can be found in the appendix. To evaluate the representations, we train a linear classifier on the representations on the same dataset with labels.
% attach a linear classifier on top of them and train it on the same dataset with labels.
Table~\ref{table:moco_cifar10} shows the top-1 accuracy of the classifier. We find that across all the three datasets, different NDA approaches outperform \emph{MoCo-V2}. While Cutout NDA performs the best for C10, the best performing NDA for C100 and ImageNet-100 are Jigsaw and Mixup respectively. Figure \ref{fig:cosine} compares the cosine distance of the representations learned w/ and w/o NDA (jigsaw) and shows that jigsaw and normal images are projected far apart from each other when trained using NDA whereas with original MoCo-v2 they are projected close to each other.
% \se{discuss experimental results?}

\begin{table}[!h]
  \caption{Top-1 accuracy results on image recognition w/ and w/o NDA on MoCo-V2.
  }
  \label{table:moco_cifar10}
  \centering
  \resizebox{0.85\textwidth}{!}{
  \begin{tabular}{lllllll}
    \toprule
    {} & MoCo-V2 & Jigsaw & Stitching & Cutout & Cutmix & Mixup\\
    \midrule
    CIFAR-10 & 91.20 & \textcolor{teal}{91.66} & 91.59 & \textbf{92.26}  & 91.51 & 91.36\\
    CIFAR-100 & 69.63 & \textbf{70.17} & 69.21 & 69.81 & 69.83 & \textcolor{teal}{69.99} \\ 
    ImageNet-100 & 69.41 & \textcolor{teal}{69.95} & 69.54 & 69.77 & 69.61 & \textbf{70.01} \\ 
    \bottomrule
  \end{tabular}}
\end{table}

\textbf{Transfer Learning for Object Detection.}
We transfer the network pre-trained over ImageNet-100 for the task of Pascal-VOC object detection using a Faster R-CNN detector (C4 backbone)~\cite{ren2015faster}. We fine-tune the network on Pascal VOC 2007+2012 trainval set and test it on the 2007 test set. The baseline MoCo achieves $38.47$ AP, $65.99$ AP50, $38.81$ AP75 whereas the MoCo trained with mixup NDA gets $38.72$ AP, $66.23$ AP50, $39.16$ AP75 (an improvement of $\approx 0.3$). 

% \subsection{Unsupervised Learning on Videos}
\textbf{Unsupervised Learning on Videos.}
In this section, we investigate the benefits of NDA in self-supervised learning of spatio-temporal embeddings from video, suitable for human action recognition. We apply NDA to Dense Predictive Coding~\citep{han2019video}, which is a single stream (RGB only) method for self-supervised representation learning on videos. For videos, we create NDA samples by performing the same transformation on all frames of the video (e.g. the same jigsaw permutation is applied to all the frames of a video). We evaluate the approach by first training the DPC model with NDA on a large-scale dataset (UCF101), and then evaluate the representations by training a supervised action classifier on UCF101 and HMDB51 datasets. As shown in Table \ref{table:DPC}, Jigsaw and Cutmix NDA improve downstream task accuracy on UCF-101 and HMDB-51, achieving new state-of-the-art performance among single stream (RGB only) methods for self-supervised representation learning (when pre-trained using UCF-101).

% \begin{table}[!h]
%   \caption{Results of DPC when pre-trained on UCF101. Representations are evaluated by training a supervised action classifier on UCF101 and HMDB51 datasets.}
%   \label{table:DPC}
%   \centering
%   \begin{tabular}{llll}
%     \toprule
%     {} & Self-Sup.(UCF-101) & Sup.(UCF-101) & Sup.(HMDB51) \\
%     \midrule
%     DPC & 60.42 & 61.35 & 45.31 \\
%     DPC w/ Jigsaw & \textbf{64.04} & 64.54 & 46.88 \\
%     DPC w/ Stitching & 63.65 & \textbf{66.07} & 45.31 \\
%     DPC w/ Cutout & 63.95 & 64.52 & 45.31 \\
%     DPC w/ CutMix & 63.33 & 63.52 & \textbf{48.43} \\
%     DPC w/ Mixup & 63.64 & 63.65 & 43.75 \\
%     \bottomrule
%   \end{tabular}
% \end{table}

\begin{table}[!h]
  \caption{Top-1 accuracy results on action recognition in videos w/ and w/o NDA in DPC.}
  %Representations are evaluated by training a supervised action classifier on UCF101 and HMDB51 datasets.
  \label{table:DPC}
  \centering
\resizebox{0.95\textwidth}{!}{  
  \begin{tabular}{lllllll}
    \toprule
    {} & DPC & Jigsaw & Stitching & Cutout & Cutmix & Mixup\\
    \midrule
    %Self-Sup.(UCF-101) & 60.42 & \textbf{64.04} & 63.65 & 63.95 & 63.33 & 63.64 \\
    UCF-101 (Pre-trained on UCF-101) & 61.35 & \textcolor{teal}{64.54} & \textbf{66.07} & 64.52 & 63.52 & 63.65\\
    HMDB51 (Pre-trained on UCF-101) & 45.31 & \textbf{46.88} & 45.31 & 45.31 & \textbf{48.43} & 43.75\\
    \bottomrule
  \end{tabular}}
\end{table}

\section{Related work}
%The use of ``negative samples'' is a ubiquitous aspect in unsupervised learning. 
%In the context of generative models, the common 
In several machine learning settings, negative samples are produced from a statistical generative model. \cite{sung2019difference} aim to generate negative data using GANs for semi-supervised learning and novelty detection while we are concerned with efficiently creating negative data to improve generative models and self-supervised representation learning. \cite{hanneke2018actively} also propose an alternative theoretical framework that relies on access to an oracle which classifies a sample as valid or not, but do not provide any practical implementation. \cite{bose2018adversarial} use adversarial training to generate hard negatives that fool the discriminator for NLP tasks whereas we obtain NDA data from positive data to improve image generation and representation learning. \cite{hou2018generative} use a GAN to learn the negative data distribution with the aim of classifying positive-unlabeled (PU) data whereas we do not have access to a mixture data but rather generate negatives by transforming the positive data.

In contrastive unsupervised learning, common negative examples are ones that are assumed to be further than the positive samples semantically. Word2Vec~\citep{mikolov2013distributed} considers negative samples to be ones from a different context and CPC-based methods~\citep{oord2018representation} such as momentum contrast~\citep{he2019momentum}, the negative samples are data augmentations from a different image. % \js{two sentences describing w2v, n2v and moco}
Our work considers a new aspect of ``negative samples'' that are neither generated from some model, nor samples from the data distribution. Instead, by applying negative data augmentation (NDA) to existing samples, we 
are able to incorporate useful inductive biases that might be difficult to capture otherwise
%may cover aspects of the problem that is difficult to grasp due to the biases in the model
~\citep{zhao2018bias}.

\section{Conclusion}
We proposed negative data augmentation as a method to incorporate prior knowledge through out-of-distribution (OOD) samples. NDAs are complementary to traditional data augmentation strategies, which are typically focused on in-distribution samples. 
Using the NDA framework, we interpret existing image transformations (e.g., jigsaw) as producing OOD samples and develop new learning algorithms to leverage them. Owing to rigorous mathematical characterization of the NDA assumption, we are able to theoretically analyze their properties.
%, e.g. by encouraging invariances under data transformations.
%Like traditional data augmentations, NDA operate through samples, which means they can be easily combined with many existing learning methods.
As an example, we 
%used existing image transformations in the literature as NDA strategies to 
bias the generator of a GAN to avoid the support of negative samples, improving results on conditional/unconditional image generation tasks. Finally, we leverage NDA for unsupervised representation learning in images and videos. By integrating NDA into MoCo-v2 and DPC, we improve results on image and action recognition on CIFAR10, CIFAR100, ImageNet-100, UCF-101, and HMDB-51 datasets. 
%These empirical successes suggest that NDA has much potential to improve a variety of unsupervised learning techniques.
Future work include exploring other augmentation strategies as well as NDAs for other modalities. %and tasks.

\bibliography{iclr2021_conference}

\begin{thebibliography}{41}
\providecommand{\natexlab}[1]{#1}
\providecommand{\url}[1]{\texttt{#1}}
\expandafter\ifx\csname urlstyle\endcsname\relax
  \providecommand{\doi}[1]{doi: #1}\else
  \providecommand{\doi}{doi: \begingroup \urlstyle{rm}\Url}\fi

\bibitem[Antoniou et~al.(2017)Antoniou, Storkey, and Edwards]{antoniou2017data}
Antreas Antoniou, Amos Storkey, and Harrison Edwards.
\newblock Data augmentation generative adversarial networks.
\newblock \emph{arXiv preprint arXiv:1711.04340}, 2017.

\bibitem[Asano et~al.(2019)Asano, Rupprecht, and Vedaldi]{asano2019critical}
Yuki~M Asano, Christian Rupprecht, and Andrea Vedaldi.
\newblock A critical analysis of self-supervision, or what we can learn from a
  single image.
\newblock \emph{arXiv preprint arXiv:1904.13132}, 2019.

\bibitem[Bose et~al.(2018)Bose, Ling, and Cao]{bose2018adversarial}
Avishek~Joey Bose, Huan Ling, and Yanshuai Cao.
\newblock Adversarial contrastive estimation.
\newblock In \emph{Proceedings of the 56th Annual Meeting of the Association
  for Computational Linguistics (Volume 1: Long Papers)}, pp.\  1021--1032,
  2018.

\bibitem[Brock et~al.(2018)Brock, Donahue, and Simonyan]{brock2018large}
Andrew Brock, Jeff Donahue, and Karen Simonyan.
\newblock Large scale gan training for high fidelity natural image synthesis.
\newblock \emph{arXiv preprint arXiv:1809.11096}, 2018.

\bibitem[Cimpoi et~al.(2014)Cimpoi, Maji, Kokkinos, Mohamed, , and
  Vedaldi]{cimpoi14describing}
M.~Cimpoi, S.~Maji, I.~Kokkinos, S.~Mohamed, , and A.~Vedaldi.
\newblock Describing textures in the wild.
\newblock In \emph{Proceedings of the {IEEE} Conf. on Computer Vision and
  Pattern Recognition ({CVPR})}, 2014.

\bibitem[Cordts et~al.(2016)Cordts, Omran, Ramos, Rehfeld, Enzweiler, Benenson,
  Franke, Roth, and Schiele]{cordts2016cityscapes}
Marius Cordts, Mohamed Omran, Sebastian Ramos, Timo Rehfeld, Markus Enzweiler,
  Rodrigo Benenson, Uwe Franke, Stefan Roth, and Bernt Schiele.
\newblock The cityscapes dataset for semantic urban scene understanding.
\newblock In \emph{Proceedings of the IEEE conference on computer vision and
  pattern recognition}, pp.\  3213--3223, 2016.

\bibitem[Deng et~al.(2009)Deng, Dong, Socher, Li, Li, and
  Fei-Fei]{deng2009imagenet}
Jia Deng, Wei Dong, Richard Socher, Li-Jia Li, Kai Li, and Li~Fei-Fei.
\newblock Imagenet: A large-scale hierarchical image database.
\newblock In \emph{2009 IEEE conference on computer vision and pattern
  recognition}, pp.\  248--255. Ieee, 2009.

\bibitem[DeVries \& Taylor(2017)DeVries and Taylor]{devries2017improved}
Terrance DeVries and Graham~W Taylor.
\newblock Improved regularization of convolutional neural networks with cutout.
\newblock \emph{arXiv preprint arXiv:1708.04552}, 2017.

\bibitem[Du \& Mordatch(2019)Du and Mordatch]{du2019implicit}
Yilun Du and Igor Mordatch.
\newblock Implicit generation and generalization in energy-based models.
\newblock \emph{arXiv preprint arXiv:1903.08689}, 2019.

\bibitem[Fefferman et~al.(2016)Fefferman, Mitter, and
  Narayanan]{fefferman2016testing}
Charles Fefferman, Sanjoy Mitter, and Hariharan Narayanan.
\newblock Testing the manifold hypothesis.
\newblock \emph{Journal of the American Mathematical Society}, 29\penalty0
  (4):\penalty0 983--1049, 2016.

\bibitem[Geirhos et~al.(2018)Geirhos, Rubisch, Michaelis, Bethge, Wichmann, and
  Brendel]{geirhos2018imagenet}
Robert Geirhos, Patricia Rubisch, Claudio Michaelis, Matthias Bethge, Felix~A
  Wichmann, and Wieland Brendel.
\newblock Imagenet-trained cnns are biased towards texture; increasing shape
  bias improves accuracy and robustness.
\newblock \emph{arXiv preprint arXiv:1811.12231}, 2018.

\bibitem[Goodfellow et~al.(2014)Goodfellow, Pouget-Abadie, Mirza, Xu,
  Warde-Farley, Ozair, Courville, and Bengio]{goodfellow2014generative}
Ian Goodfellow, Jean Pouget-Abadie, Mehdi Mirza, Bing Xu, David Warde-Farley,
  Sherjil Ozair, Aaron Courville, and Yoshua Bengio.
\newblock Generative adversarial nets.
\newblock In \emph{Advances in neural information processing systems}, pp.\
  2672--2680, 2014.

\bibitem[Gulrajani et~al.(2017)Gulrajani, Ahmed, Arjovsky, Dumoulin, and
  Courville]{gulrajani2017improved}
Ishaan Gulrajani, Faruk Ahmed, Martin Arjovsky, Vincent Dumoulin, and Aaron~C
  Courville.
\newblock Improved training of wasserstein gans.
\newblock In \emph{Advances in neural information processing systems}, pp.\
  5767--5777, 2017.

\bibitem[Han et~al.(2019)Han, Xie, and Zisserman]{han2019video}
Tengda Han, Weidi Xie, and Andrew Zisserman.
\newblock Video representation learning by dense predictive coding.
\newblock In \emph{Proceedings of the IEEE International Conference on Computer
  Vision Workshops}, pp.\  0--0, 2019.

\bibitem[Hanneke et~al.(2018)Hanneke, Kalai, Kamath, and
  Tzamos]{hanneke2018actively}
Steve Hanneke, Adam~Tauman Kalai, Gautam Kamath, and Christos Tzamos.
\newblock Actively avoiding nonsense in generative models.
\newblock In \emph{Conference On Learning Theory}, pp.\  209--227, 2018.

\bibitem[He et~al.(2019)He, Fan, Wu, Xie, and Girshick]{he2019momentum}
Kaiming He, Haoqi Fan, Yuxin Wu, Saining Xie, and Ross Girshick.
\newblock Momentum contrast for unsupervised visual representation learning.
\newblock \emph{arXiv preprint arXiv:1911.05722}, 2019.

\bibitem[Hendrycks \& Dietterich(2018)Hendrycks and
  Dietterich]{hendrycks2018benchmarking}
Dan Hendrycks and Thomas~G Dietterich.
\newblock Benchmarking neural network robustness to common corruptions and
  surface variations.
\newblock \emph{arXiv preprint arXiv:1807.01697}, 2018.

\bibitem[Hendrycks \& Gimpel(2016)Hendrycks and Gimpel]{hendrycks2016baseline}
Dan Hendrycks and Kevin Gimpel.
\newblock A baseline for detecting misclassified and out-of-distribution
  examples in neural networks.
\newblock \emph{arXiv preprint arXiv:1610.02136}, 2016.

\bibitem[Hendrycks et~al.(2018)Hendrycks, Mazeika, and
  Dietterich]{hendrycks2018deep}
Dan Hendrycks, Mantas Mazeika, and Thomas Dietterich.
\newblock Deep anomaly detection with outlier exposure.
\newblock \emph{arXiv preprint arXiv:1812.04606}, 2018.

\bibitem[Hou et~al.(2018)Hou, Chaib-Draa, Li, and Zhao]{hou2018generative}
Ming Hou, Brahim Chaib-Draa, Chao Li, and Qibin Zhao.
\newblock Generative adversarial positive-unlabeled learning.
\newblock In \emph{Proceedings of the 27th International Joint Conference on
  Artificial Intelligence}, pp.\  2255--2261, 2018.

\bibitem[Isola et~al.(2017)Isola, Zhu, Zhou, and Efros]{isola2017image}
Phillip Isola, Jun-Yan Zhu, Tinghui Zhou, and Alexei~A Efros.
\newblock Image-to-image translation with conditional adversarial networks.
\newblock In \emph{Proceedings of the IEEE conference on computer vision and
  pattern recognition}, pp.\  1125--1134, 2017.

\bibitem[Jaiswal et~al.(2018)Jaiswal, Wu, Abd-Almageed, and
  Natarajan]{jaiswal2018unsupervised}
Ayush Jaiswal, Rex~Yue Wu, Wael Abd-Almageed, and Prem Natarajan.
\newblock Unsupervised adversarial invariance.
\newblock In \emph{Advances in Neural Information Processing Systems}, pp.\
  5092--5102, 2018.

\bibitem[Mikolov et~al.(2013)Mikolov, Sutskever, Chen, Corrado, and
  Dean]{mikolov2013distributed}
Tomas Mikolov, Ilya Sutskever, Kai Chen, Greg~S Corrado, and Jeff Dean.
\newblock Distributed representations of words and phrases and their
  compositionality.
\newblock In \emph{Advances in neural information processing systems}, pp.\
  3111--3119, 2013.

\bibitem[Miyato et~al.(2018)Miyato, Kataoka, Koyama, and
  Yoshida]{miyato2018spectral}
Takeru Miyato, Toshiki Kataoka, Masanori Koyama, and Yuichi Yoshida.
\newblock Spectral normalization for generative adversarial networks.
\newblock \emph{arXiv preprint arXiv:1802.05957}, 2018.

\bibitem[Netzer et~al.(2011)Netzer, Wang, Coates, Bissacco, Wu, and
  Ng]{netzer2011reading}
Yuval Netzer, Tao Wang, Adam Coates, Alessandro Bissacco, Bo~Wu, and Andrew~Y
  Ng.
\newblock Reading digits in natural images with unsupervised feature learning.
\newblock 2011.

\bibitem[Nguyen et~al.(2008)Nguyen, Wainwright, and
  Jordan]{nguyen2008estimating}
Xuanlong Nguyen, Martin~J Wainwright, and Michael~I Jordan.
\newblock Estimating divergence functionals and the likelihood ratio by convex
  risk minimization.
\newblock \emph{arXiv preprint arXiv:0809.0853}, \penalty0 (11):\penalty0
  5847--5861, September 2008.
\newblock \doi{10.1109/TIT.2010.2068870}.

\bibitem[Noroozi \& Favaro(2016)Noroozi and Favaro]{noroozi2016unsupervised}
Mehdi Noroozi and Paolo Favaro.
\newblock Unsupervised learning of visual representations by solving jigsaw
  puzzles.
\newblock In \emph{European Conference on Computer Vision}, pp.\  69--84.
  Springer, 2016.

\bibitem[Nowozin et~al.(2016)Nowozin, Cseke, and Tomioka]{nowozin2016f}
Sebastian Nowozin, Botond Cseke, and Ryota Tomioka.
\newblock f-gan: Training generative neural samplers using variational
  divergence minimization.
\newblock In \emph{Advances in neural information processing systems}, pp.\
  271--279, 2016.

\bibitem[Oord et~al.(2018)Oord, Li, and Vinyals]{oord2018representation}
Aaron van~den Oord, Yazhe Li, and Oriol Vinyals.
\newblock Representation learning with contrastive predictive coding.
\newblock \emph{arXiv preprint arXiv:1807.03748}, 2018.

\bibitem[Poole et~al.(2019{\natexlab{a}})Poole, Ozair, Oord, Alemi, and
  Tucker]{poole2019variational}
Ben Poole, Sherjil Ozair, Aaron van~den Oord, Alexander~A Alemi, and George
  Tucker.
\newblock On variational bounds of mutual information.
\newblock \emph{arXiv preprint arXiv:1905.06922}, 2019{\natexlab{a}}.

\bibitem[Poole et~al.(2019{\natexlab{b}})Poole, Ozair, van~den Oord, Alemi, and
  Tucker]{poole2019on}
Ben Poole, Sherjil Ozair, Aaron van~den Oord, Alexander~A Alemi, and George
  Tucker.
\newblock On variational bounds of mutual information.
\newblock \emph{arXiv preprint arXiv:1905.06922}, May 2019{\natexlab{b}}.

\bibitem[Ren et~al.(2015)Ren, He, Girshick, and Sun]{ren2015faster}
Shaoqing Ren, Kaiming He, Ross Girshick, and Jian Sun.
\newblock Faster r-cnn: Towards real-time object detection with region proposal
  networks.
\newblock In \emph{Advances in neural information processing systems}, pp.\
  91--99, 2015.

\bibitem[Song \& Ermon(2019)Song and Ermon]{song2019understanding}
Jiaming Song and Stefano Ermon.
\newblock Understanding the limitations of variational mutual information
  estimators.
\newblock \emph{arXiv preprint arXiv:1910.06222}, October 2019.

\bibitem[Sung et~al.(2019)Sung, Hsieh, Pei, and Lu]{sung2019difference}
Yi~Lin Sung, Sung-Hsien Hsieh, Soo-Chang Pei, and Chun-Shien Lu.
\newblock Difference-seeking generative adversarial network--unseen sample
  generation.
\newblock In \emph{International Conference on Learning Representations}, 2019.

\bibitem[Tishby \& Zaslavsky(2015)Tishby and Zaslavsky]{tishby2015deep}
Naftali Tishby and Noga Zaslavsky.
\newblock Deep learning and the information bottleneck principle.
\newblock \emph{arXiv preprint arXiv:1503.02406}, March 2015.

\bibitem[Wolpert \& Macready(1997)Wolpert and Macready]{wolpert1997no}
David~H Wolpert and William~G Macready.
\newblock No free lunch theorems for optimization.
\newblock \emph{IEEE transactions on evolutionary computation}, 1\penalty0
  (1):\penalty0 67--82, 1997.

\bibitem[Yun et~al.(2019)Yun, Han, Oh, Chun, Choe, and Yoo]{yun2019cutmix}
Sangdoo Yun, Dongyoon Han, Seong~Joon Oh, Sanghyuk Chun, Junsuk Choe, and
  Youngjoon Yoo.
\newblock Cutmix: Regularization strategy to train strong classifiers with
  localizable features.
\newblock In \emph{Proceedings of the IEEE International Conference on Computer
  Vision}, pp.\  6023--6032, 2019.

\bibitem[Zhang et~al.(2017)Zhang, Cisse, Dauphin, and
  Lopez-Paz]{zhang2017mixup}
Hongyi Zhang, Moustapha Cisse, Yann~N Dauphin, and David Lopez-Paz.
\newblock mixup: Beyond empirical risk minimization.
\newblock \emph{arXiv preprint arXiv:1710.09412}, 2017.

\bibitem[Zhang et~al.(2016)Zhang, Isola, and Efros]{zhang2016colorful}
Richard Zhang, Phillip Isola, and Alexei~A Efros.
\newblock Colorful image colorization.
\newblock In \emph{European conference on computer vision}, pp.\  649--666.
  Springer, 2016.

\bibitem[Zhao et~al.(2018)Zhao, Ren, Yuan, Song, Goodman, and
  Ermon]{zhao2018bias}
Shengjia Zhao, Hongyu Ren, Arianna Yuan, Jiaming Song, Noah Goodman, and
  Stefano Ermon.
\newblock Bias and generalization in deep generative models: An empirical
  study.
\newblock In \emph{Advances in Neural Information Processing Systems}, pp.\
  10792--10801, 2018.

\bibitem[Zhou et~al.(2017)Zhou, Lapedriza, Khosla, Oliva, and
  Torralba]{zhou2017places}
Bolei Zhou, Agata Lapedriza, Aditya Khosla, Aude Oliva, and Antonio Torralba.
\newblock Places: A 10 million image database for scene recognition.
\newblock \emph{IEEE transactions on pattern analysis and machine
  intelligence}, 40\penalty0 (6):\penalty0 1452--1464, 2017.

\end{thebibliography}
\bibliographystyle{iclr2021_conference}

\appendix
\appendix

\section{Numerosity Containment}
\label{sec:numerosity}
\cite{zhao2018bias} systematically investigate generalization in deep generative models using two different datasets: (a) a toy dataset where there are $k$ non-overlapping dots (with random color and location) in the image (see Figure \ref{fig:dots_data}), and (b) the CLEVR dataset where ther are $k$ objects (with random shape, color, location, and size) in the images (see Figure \ref{fig:clevr_data}). They train a GAN model (WGAN-GP~\cite{gulrajani2017improved}) with (either) dataset and observe that the learned distribution does not produce the same number of objects as in the dataset it was trained on. The distribution of the numerosity in the generated images is centered at the numerosity from the dataset, with a slight-bias towards over-estimation. For, example when trained on images with six dots, the generated images contain anywhere from two to eight dots (see Figure \ref{fig:dots}). The observation is similar when trained on images with two CLEVR objects. The generated images contain anywhere from one to three dots (see Figure \ref{fig:clevr}). 

In order to remove samples with numerosity different from the train dataset, we use such samples as negative data during training. For example, while training on images with six dots we use images with four, five and seven dots as negative data for the GAN. The resulting distribution of the numerosity in the generated images is constrained to six. We observe similar behaviour when training a GAN with images containing two CLEVR objects as positive data and images with one or three objects as negative data.

\begin{figure*}[!h]
\centering
\begin{subfigure}[b]{0.5\textwidth}
    \includegraphics[width=\textwidth]{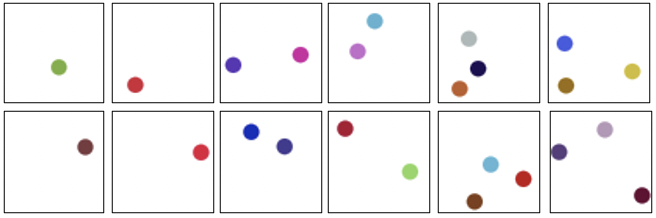}
    \caption{Dots}
    \label{fig:dots_data}
\end{subfigure}
\\
\begin{subfigure}[b]{0.5\textwidth}
    \includegraphics[width=\textwidth]{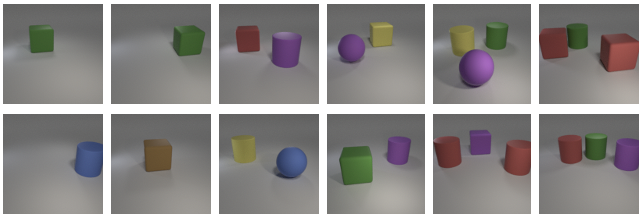}
    \caption{CLEVR}
    \label{fig:clevr_data}
\end{subfigure}

\caption{Toy Datasets used in Numerosity experiments.}
\label{fig:toydataset}
\end{figure*}

\begin{figure*}[!h]
\centering
\begin{subfigure}[b]{0.4\textwidth}
    \includegraphics[width=\textwidth]{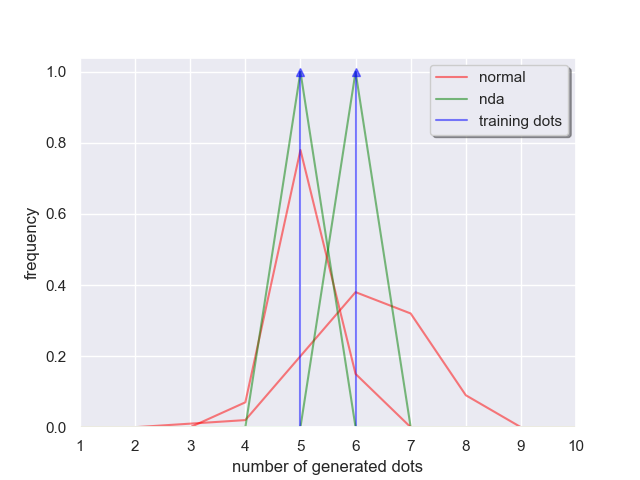}
    \caption{}
    \label{fig:dots}
\end{subfigure}%
~
\begin{subfigure}[b]{0.4\textwidth}
    \includegraphics[width=\textwidth]{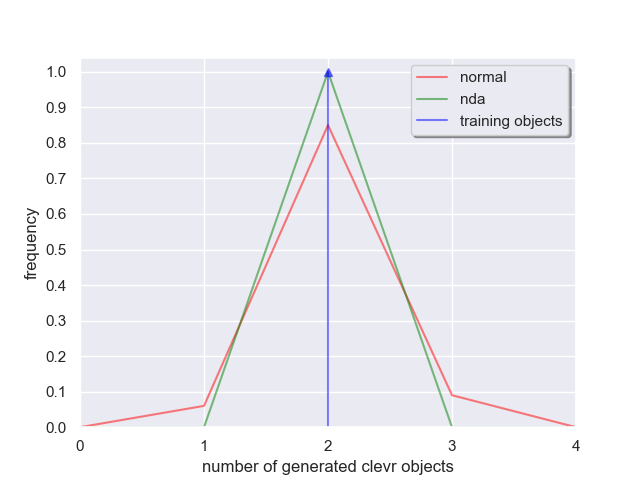}
    \caption{}
    \label{fig:clevr}
\end{subfigure}

\caption{\textbf{Left}: Distribution over number of dots. The arrows are the number of dots the learning algorithm is trained on, and the solid line is the distribution over the number of dots the model generates. \textbf{Right}: Distribution over number of CLEVR objects the model generates. Generating CLEVR is harder so we explore only one, but the behaviour with NDA is similar to dots.}
\label{fig:toyeexp}
\end{figure*}
\section{Image Transformations}
Given an image of size $H \times W$, the different image transformations that we used are described below.
\begin{description}
\label{sec:transformation}
    \item[Jigsaw-$K$] \citep{noroozi2016unsupervised} We partition the image into a grid of $K \times K$ patches of size $(H/K) \times (W/K)$, indexed by $[1, \ldots, K\times K]$. Then we shuffle the image patches according to a random permutation (different from the original order) to produce the NDA image. Empirically, we find $K=2$ to work the best for Jigsaw-$K$ NDA.
    \item[Stitching] We stitch two equal-sized patches of two different images, either horizontally ($H/2 \times W$) or vertically ($H \times W/2$), chosen uniformly at random, to produce the NDA image.
    \item[Cutout / Cutmix] 
    % We select a random patch in the image where we randomly select height, width between one-third and one-half of image height and width respectively. 
    We select a random patch in the image with its height and width lying between one-third and one-half of the image height and width respectively. 
    To construct NDA images, this patch is replaced with the mean pixel value of the patch (like cutout~\citep{devries2017improved} with the only difference that they use zero-masking), or the pixel values of another image at the same location (cutmix~\citep{yun2019cutmix}). 
    \item[Mixup-$\alpha$] NDA image is constructed from a linear interpolation between two images $\vx$ and $\vy$~\citep{zhang2017mixup}, $\gamma \vx + (1 - \gamma) \vy$; $\gamma\sim \mathrm{Beta}(\alpha, \alpha)$. $\alpha$ is chosen so that the distribution has high density at 0.5.
    
    %\js{which distribution? this matters because if $\lambda$ is close to 1 then we don't have negative sample}\ayush{we are sampling from beta distribution.}\js{which parameters for beta? using 1, 1, is very different from using 4, 4}\ayush{currently using 1,1. What would you suggest? We can rerun the experiment.}\js{I would suggest larger hyperparameters, since it leads to samples more centered on the mean. wikipedia has some nice visualizations, i feel like 1, 1, is not going to cut it because it is uniform}
    \item[Other classes] NDA images are sampled from other classes in the same dataset. See Appendix A.

%    \kristy{it's a lot easier for me to believe that jigsaw and cutout lead to "disjoint" samples from the support than images from other classes, since they effectively mangle the image you're working with. here it feels like the way you define "support" will also change for each strategy (e.g. for other classes, the support should be restricted to the class of interest), but perhaps you elaborate upon this point in the appendix}\abhishek{nice point.. need to mention this in appendix}
\end{description}

\section{NDA for GANs}
\label{sec:gan-theorem}
\thmgan*
\begin{proof}

Let us use $p(x), \overline{p}(x), q(x)$ to denote the density functions of $\pdata, \overline{P}$ and $G_\theta$ respectively (and $P$, $\overline{P}$, $Q$ for the respective distributions).
First, from Lemma 1 in \cite{nguyen2008estimating}, we have that 
\begin{gather}
     \max_{D_\phi: \gX \to \R} L_f(G_\theta, D_\phi) = D_f(P \Vert G_\theta) \\
      \max_{D_\phi: \gX \to \R} L_f(\lambda G_\theta + (1 - \lambda) \overline{P}, D_\phi) = D_f(P \Vert \lambda Q + (1-\lambda) \overline{P})
\end{gather}
where $D_f$ refers to the $f$-divergence.
Then, we have
\begin{align*}
 & \quad D_f(P||\lambda Q + (1-\lambda) \overline{P}) \\
 =& \int_\mathcal{X} \left( \lambda q(x) + (1-\lambda)  \overline{p}(x)\right) f\left(\frac{p(x)}{\lambda q(x) + (1-\lambda) \overline{p}(x)} \right) \\
 =& \int_\mathcal{X}  \lambda q(x)  f\left(\frac{p(x)}{\lambda q(x) + (1-\lambda) \overline{p}(x)} \right) + (1-\lambda) f(0) \\
   \geq & \lambda f\left( \int_\mathcal{X}  q(x)  \frac{p(x)}{\lambda q(x) + (1-\lambda) \overline{p}(x)} \right) + (1-\lambda) f(0) \numberthis \label{eq:jensen-f}\\
    = &  \lambda f\left( \frac{1}{\lambda} \int_\mathcal{X} \lambda q(x)  \frac{p(x)}{\lambda q(x) + (1-\lambda) \overline{p}(x)} \right) + (1-\lambda) f(0)\\
         = & \lambda f\left( \frac{1}{\lambda} \int_\mathcal{X} \left(\lambda q(x) + (1-\lambda) \overline{p}(x)  - (1-\lambda) \overline{p}(x) \right) \frac{p(x)}{\lambda q(x) + (1-\lambda) \overline{p}(x)} \right) + (1-\lambda) f(0)\\
                  = & \lambda f\left( \frac{1}{\lambda}- \int_\mathcal{X} \left( (1-\lambda) \overline{p}(x) \right) \frac{p(x)}{\lambda q(x) + (1-\lambda) \overline{p}(x)} \right) + (1-\lambda) f(0) \\
                  = &              \lambda f\left( \frac{1}{\lambda} \right) + (1-\lambda) f(0) \numberthis \label{eq:final-step}
 \end{align*}
 where we use the fact that $f$ is convex with Jensen's inequality in Eq.(\ref{eq:jensen-f}) and the fact that $p(x) \overline{p}(x) =0, \forall x \in \gX$ in Eq.(\ref{eq:final-step}) since $P$ and $\overline{P}$ has disjoint support.
 
 We also have
 \begin{align*}
                   D_f(P||\lambda P + (1-\lambda) \overline{P})
&=
\int_\mathcal{X} \left( \lambda p(x) + (1-\lambda)  \overline{p}(x)\right) f\left(\frac{p(x)}{\lambda p(x) + (1-\lambda) \overline{p}(x)} \right) \\
&= \int_\mathcal{X} \left( \lambda p(x)\right) f\left(\frac{p(x)}{\lambda p(x) + (1-\lambda) \overline{p}(x)} \right) + (1-\lambda) f(0) \\
 &= \int_\mathcal{X} \left( \lambda p(x)\right) f\left(\frac{p(x)}{\lambda p(x) + 0} \right) + (1-\lambda) f(0) \\
  &= \lambda f\left( \frac{1}{\lambda} \right) + (1-\lambda) f(0) 
\end{align*}
Therefore, in order for the inequality in \Eqref{eq:jensen-f} to be an equality, we must have that $q(\vx) = p(\vx)$ for all $\vx \in \gX$. Therefore, the generator distribution recovers the data distribution at the equlibrium posed by the NDA-GAN objective, which is also the case for the original GAN objective.

Moreover, from Lemma 1 in~\cite{nguyen2008estimating}, we have that:
\begin{align}
    \argmax_{D_\phi} L_f(Q, D_\phi) = f'(\pdata / Q)
\end{align}
Therefore, by replacing $Q$ with $G_\theta$ and $(\lambda G_\theta + (1 - \lambda) \overline{P})$, we have:
\begin{gather}
    \argmax_{D_\phi: \gX \to \R} L_f(G_\theta, D_\phi) = f'(\pdata / G_\theta) \\
        \argmax_{D_\phi: \gX \to \R} L_f(\lambda G_\theta + (1 - \lambda) \overline{P}, D_\phi) = f'(\pdata / (\lambda G_\theta + (1 - \lambda) \overline{P}))
\end{gather}
which shows that the optimal discriminators are indeed different for the two objectives.
\end{proof}
% As a corollary, let $P_d$ be a data distribution. Let $\overline{P}_d$ be any distribution with disjoint support (e.g., Jigsaw). Then
% \[
% \arg \inf_G D_f(P_d || P_G) = \arg \inf_G D_f(P_d || \lambda P_G + (1-\lambda) \overline{P}_d)
% \]
\section{NDA for Contrastive Representation Learning}
\label{sec:rep-theory}

We describe the detailed statement of Theorem 2 and proof as follows.
\begin{restatable}{theorem}{thmcpc}
For some distribution $\overline{p}$ over $\gX$ such that $\supp(\overline{p}) \cap \supp(\pdata) = \varnothing$, and for any maximizer of the NDA-CPC objective 
$$\hat{h} \in \argmax_{h_\theta} \max_{g_\phi} \overline{I_{\mathrm{CPC}}}(h_{\theta}, g_\phi)$$the representations of negative samples are disjoint from that of positive samples for $\hat{h}$; \textit{i.e.}, $\forall \vx \in \supp(\pdata), \bar{\vx} \in \supp(\overline{p})$,
$$\supp(\hat{h}(\bar{\vx})) \cap \supp(\hat{h}(\vx)) = \varnothing$$
\end{restatable}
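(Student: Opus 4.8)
The plan is to analyze the NDA-CPC objective pointwise and identify the optimal critic $g_\phi$ as a density ratio, then show that forcing this ratio to take its optimal form is incompatible with representations of positive and NDA samples overlapping. First I would recall the standard CPC analysis (as in \citep{poole2019variational,song2019understanding}): for a fixed encoder $h_\theta$, the inner maximization over $g_\phi$ in $\overline{I_{\mathrm{CPC}}}$ is a multi-class classification problem, and by the usual Bayes-optimal-classifier argument the optimizer satisfies $g_\phi(\vx, \vz) \propto p_\theta(\vz \mid \vx) / r(\vz)$ for some normalizer $r$, where $p_\theta(\vz\mid\vx) = h_\theta(\vz\mid\vx)$. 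The key new ingredient is that the "noise" samples in the denominator now come from a \emph{mixture}: with the $n-1$ terms $\widehat{\vz}_j$ drawn from the marginal $p_\theta(\vz)$ and the $m$ terms $\overline{\vz}_k$ drawn from the NDA-induced marginal $\overline{p}_\theta(\vz) := \int h_\theta(\vz\mid\bar\vx)\,\overline{p}(\bar\vx)\,\diff\bar\vx$. So the relevant "contrast" distribution against which $h_\theta(\vz\mid\vx)$ is compared is a mixture of $p_\theta(\vz)$ and $\overline{p}_\theta(\vz)$, and the optimal critic is $g_\phi(\vx,\vz) \propto h_\theta(\vz\mid\vx) / \bigl(\alpha\, p_\theta(\vz) + \beta\, \overline{p}_\theta(\vz)\bigr)$ for appropriate constants depending on $n, m$.

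Next I would plug this optimal $g_\phi$ back into $\overline{I_{\mathrm{CPC}}}$ to get a functional of $h_\theta$ alone, analogous to how CPC becomes (a bound on) mutual information. The outer maximization over $h_\theta$ then wants to make $h_\theta(\vz\mid\vx)$ as "distinguishable as possible" from the mixture contrast distribution — intuitively, to put the representation mass of each $\vx$ where $\alpha p_\theta(\vz) + \beta\overline{p}_\theta(\vz)$ is small. The crucial step is then a proof by contradiction: suppose $\hat h$ is a maximizer but there exist $\vx \in \supp(\pdata)$ and $\bar\vx \in \supp(\overline{p})$ with $\supp(\hat h(\bar\vx)) \cap \supp(\hat h(\vx)) \neq \varnothing$. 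Then $\overline{p}_{\hat\theta}(\vz) > 0$ on a set where $p_{\hat\theta}(\vz) > 0$ as well, i.e., the NDA representation mass "pollutes" the region used by a genuine positive sample. I would argue this strictly decreases the objective relative to a modified encoder $\tilde h$ that agrees with $\hat h$ on $\supp(\pdata)$ but reroutes all NDA representation mass to a disjoint region of $\gZ$ (this is possible since $\gZ$ can be taken large enough / the encoder is unconstrained). Because the positive term $\log g_\phi(\vx,\vz)$ in the numerator is unchanged under $\tilde h$ for positive $(\vx,\vz)$ pairs, while the denominator terms $\sum_k g_\phi(\vx, \overline{\vz}_k)$ are strictly reduced (the NDA $\overline{\vz}_k$ no longer land in the high-$g_\phi$ region associated with $\vx$), the objective strictly increases — contradicting optimality of $\hat h$.

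The main obstacle I anticipate is making the "reroute the NDA mass to a disjoint region" argument fully rigorous: I need to exhibit a concrete competitor encoder $\tilde h$ and verify that (i) it is a valid element of the function class $\gX \to \gP(\gZ)$, (ii) the marginal $p_{\tilde\theta}(\vz)$ over positive data is genuinely unchanged so the numerator contributions and the induced optimal critic restricted to positive pairs do not worsen, and (iii) the net change in expected log-ratio is strictly positive rather than merely nonnegative — this last point requires the overlap to occur on a set of positive measure under both $\pdata \circ \hat h$ and $\overline{p} \circ \hat h$, which follows from the contradiction hypothesis but needs to be stated carefully with the right measure-theoretic qualifiers. A secondary subtlety is the interplay between the $n\to\infty$ (and $m$ fixed or scaling) limit used to make CPC tight versus the finite-$n,m$ statement; I would either state the theorem for the idealized optimal-critic objective directly (sidestepping the bound-tightness issue) or note that the disjointness conclusion holds for the finite objective as well because the argument only uses monotonicity of $\log$ and the strict positivity of the overlap contribution.
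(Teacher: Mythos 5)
Your core idea — contradiction, then modify the encoder to ``reroute'' NDA representation mass away from the positive support and show a strict improvement — is exactly the paper's argument. But the preprocessing you front-load (characterizing the optimal critic as the density ratio $h_\theta(\vz\mid\vx)/(\alpha p_\theta(\vz)+\beta\overline{p}_\theta(\vz))$ and then substituting it back to obtain a functional of $h_\theta$ alone) is an unnecessary detour, and it is what creates the ``secondary subtlety'' you flag about the $n\to\infty$ limit versus the finite-$n,m$ objective. The paper avoids this entirely: rather than working with an explicit optimal-critic form, it picks a deliberately suboptimal \emph{witness} critic $\hat g'$ for the rerouted encoder $\hat h'$, namely $\hat g' = \hat g$ on $\bigcup_{\vx\in\supp(\pdata)}\supp(\hat h'(\vx))$ and $0$ elsewhere, where $\hat g$ is any optimizer for the original $\hat h$. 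Since $\max_{g_\phi}\overline{I_{\mathrm{CPC}}}(\hat h',g_\phi) \geq \overline{I_{\mathrm{CPC}}}(\hat h',\hat g')$, and since $\hat g'$ zeroes out the NDA denominator contributions while leaving all other terms identical to the $(\hat h,\hat g)$ evaluation, the comparison is immediate and holds at any finite $n,m$ — no asymptotics and no density ratios needed.

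On your concern (iii), the strictness: the paper resolves it with a short observation you did not quite articulate. If $\overline{\vz}\in\supp(\hat h(\bar\vx))\cap\supp(\hat h(\vx))$ for a positive $\vx$, then $(\vx,\overline{\vz})$ is a possible positive pair under $\hat h$, and any optimal $\hat g$ must assign $\hat g(\vx,\overline{\vz})>0$ (otherwise the numerator $\log \hat g(\vx,\vz)$ hits $-\infty$ on a positive-measure event, which cannot happen at the maximum). Hence the rerouting provably removes a strictly positive denominator term, giving strict improvement. Incorporating that step, and dropping the density-ratio/plug-in machinery in favor of the witness-critic comparison, would turn your proposal into the paper's proof.
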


% \sj{Theorem can be restructured to make more readable. Took me three reads to see the point. I think you want to say: out of all functions $h$ that have some representation for positive samples, the one that maximizes CPC is the one that maps pdata and NDA to disjoint representations.} 
% \js{this allows us to reuse thm numbers}
% \begin{theorem}
% For some distribution $\overline{p}$ over $\gX$ such that $\supp(\overline{p}) \cap \supp(\pdata) = \varnothing$, $\forall h_{\theta_1}, h_{\theta_2}: \gX \to \gP(\gZ)$ and $\forall \vx \in \supp(\pdata), \bar{\vx} \in \supp(\overline{p})$, if the following two statements are true:
% \begin{enumerate}
%     \item Representations of positive samples are identical for $h_{\theta_1}$ and $h_{\theta_2}$, \textit{i.e.}, $h_{\theta_1}(\vx) = h_{\theta_2}(\vx)$;
%     \item Representations of negative samples are disjoint from that of positive samples for $h_{\theta_1}$, \textit{i.e.}, $\supp(h_{\theta_1}(\bar{\vx})) \cap \supp(h_{\theta_1}(\vx)) = \varnothing$.
% \end{enumerate}
% then
% $
%     \max_{g_\phi} \overline{I_{\mathrm{CPC}}}(h_{\theta_1}, g_\phi) \geq \max_{g_\phi} \overline{I_{\mathrm{CPC}}}(h_{\theta_2}, g_\phi)
% $, 
% i.e., the objective value is higher when NDA representations are disjoint from $\pdata$ representations.
% \end{theorem}
\begin{proof}
We use a contradiction argument to establish the proof.
For any representation mapping that maximizes the NDA-CPC objective, 
$$\hat{h} \in \argmax_{h_\theta} \max_{g_\phi} \overline{I_{\mathrm{CPC}}}(h_{\theta}, g_\phi)$$ suppose that the positive and NDA samples share some support, i.e., $\exists \vx \in \supp(\pdata), \bar{\vx} \in \supp(\overline{p})$,
$$\supp(\hat{h}(\bar{\vx})) \cap \supp(\hat{h}(\vx)) \neq \varnothing$$ 
We can always construct $\hat{h}'$ that shares the same representation with $\hat{h}$ for $\pdata$ but have disjoint representations for NDA samples; i.e., $\forall \vx \in \supp(\pdata), \bar{\vx} \in \supp(\overline{p})$, the following two statements are true:
\begin{enumerate}
    \item $\hat{h}(\vx) = \hat{h}'(\vx)$;
    \item $\supp(\hat{h}'(\bar{\vx})) \cap \supp(\hat{h}'(\vx)) = \varnothing$.
\end{enumerate}
Our goal is to prove that:
\begin{align}
    \max_{g_\phi} \overline{I_{\mathrm{CPC}}}(\hat{h}', g_\phi) > \max_{g_\phi} \overline{I_{\mathrm{CPC}}}(\hat{h}, g_\phi)
\end{align}
which shows a contradiction. 

For ease of exposition, let us allow zero values for the output of $g$, and define $0/0 = 0$ (in this case, if $g$ assigns zero to positive values, then the CPC objective becomes $-\infty$, so it cannot be a maximizer to the objective).

Let $\hat{g} \in \argmax \overline{I_{\mathrm{CPC}}}(\hat{h}, g_\phi)$ be an optimal critic to the representation model $\hat{h_{\theta}}$
. We then define a following critic function:
\begin{align}
    \hat{g}'(\vx, \vz) = \begin{cases}
    \hat{g}(\vx, \vz) & \text{if } \exists \vx \in \supp(\pdata) \quad s.t. \quad \vz \in \supp(\hat{h}'(\vx)) \\
    0 & \text{otherwise}
    \end{cases}
\end{align}
In other words, the critic assigns the same value for data-representation pairs over the support of $\pdata$ and zero otherwise.
From the assumption over $\hat{h}$, $\exists \vx \in \supp(\pdata), \bar{\vx} \in \supp(\overline{p})$, and $\overline{\vz} \in \supp(\hat{h}(\bar{\vx}))$,
$$\overline{\vz} \in \supp(\hat{h}(\vx))$$
so $(\vx, \overline{\vz})$
can be sampled as a positive pair and $\hat{g}(\vx, \overline{\vz}) > 0$.

Therefore, 
\begin{align}
    & \max_{g_\phi} \overline{I_{\mathrm{CPC}}}(\hat{h}', g_\phi) \geq \overline{I_{\mathrm{CPC}}}(\hat{h}', \hat{g}')  &\\
    = \ & \E\Bigg[ \log{\frac{(n + m) \hat{g}'(\vx,\vz)}{\hat{g}'(\vx,\vz) +  \sum_{j=1}^{n-1} \hat{g}'(\vx, \widehat{\vz_j}) + \sum_{k=1}^m \underbrace{\hat{g}'(\vx, \overline{\vz_k})}_{=0}}}\Bigg] & \quad \text{(plug in definition for NDA-CPC)} \nonumber \\
        \geq \ & \E\Bigg[ \log \frac{(n + m) \hat{g}(\vx,\vz)}{\hat{g}(\vx,\vz) + \sum_{j=1}^{n-1} \hat{g}(\vx, \widehat{\vz_j}) + \sum_{k=1}^m \hat{g}(\vx, \overline{\vz_k})}\Bigg] \quad & \text{(existence of some} \hat{g}(\vx, \overline{\vz}) > 0 \text{ )} \nonumber \\
        = \ & \max_{g_\phi} \overline{I_{\mathrm{CPC}}}(\hat{h}, g_\phi) & \quad \text{(Assumption that  } \hat{g} \text{ is optimal critic)} \nonumber
\end{align}
which proves the theorem via contradiction.
\end{proof}

% \begin{figure*}[!h]
% \centering
%     \includegraphics[width=0.45\textwidth]{}\hspace{2em}
%     \includegraphics[width=0.45\textwidth]{}
%     \caption{Schematic overview of our NDA framework in CPC showing that by introducing NDA to the CPC objective as negative samples, the network is encouraged to make the NDA representations disjoint from the $\pdata$ representations. Thus providing supervision over the NDA representations helps regularizing the representation network over $\pdata$. \ayush{Figure showing NDA in Constrstive learning. Feedback?} \js{need to describe difference between left and right in caption}}
%     \label{fig:overview2}
    
% \end{figure*}

\section{Pix2Pix}
\label{sec:pix2pix}
Figure \ref{fig:pix2pix_nda} highlights the qualitative improvements when we apply the NDA method to  Pix2Pix model~\citep{isola2017image}.

\begin{figure}[!htb]
    \centering
    \includegraphics[width=0.75\textwidth]{}
    \caption{Qualitative results on Cityscapes.}
    \label{fig:pix2pix_nda}
\end{figure}

\section{Anomaly Detection}
Here, we show the histogram of difference in discriminator’s output for clean and OOD samples in Figure \ref{fig:anomaly}. High difference values imply that the Jigsaw NDA is better at distinguishing OOD samples than the normal BigGAN.

\begin{figure*}[!ht]
\begin{subfigure}[b]{0.3\textwidth}
\centering
    \includegraphics[width=1.5in,height=1.5in]{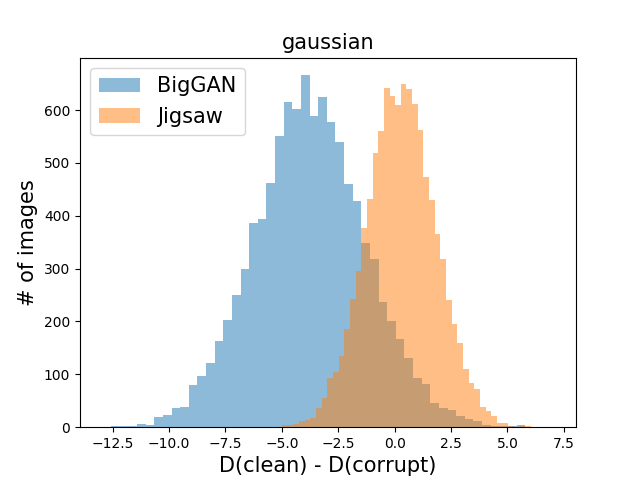}
    \caption{Gaussian Noise}
    \label{fig:anomaly_zoom}
\end{subfigure}
\begin{subfigure}[b]{0.3\textwidth}
\centering
    \includegraphics[width=1.5in,height=1.5in]{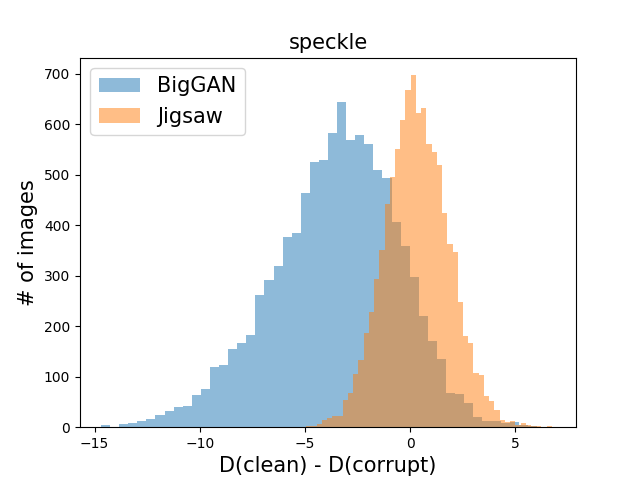}
\caption{Speckle Noise}
\label{fig:anomaly_speckle}
\end{subfigure}
\begin{subfigure}[b]{0.3\textwidth}
\centering
    \includegraphics[width=1.5in,height=1.5in]{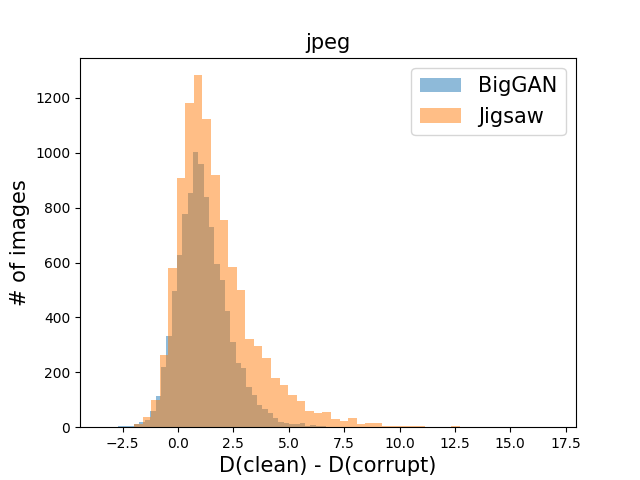}
\caption{JPEG Compression}
\label{fig:anomaly_jpeg}
\end{subfigure}
\caption{Histogram of D(clean) - D(corrupt) for 3 different corruptions.}
\label{fig:anomaly}
\end{figure*}

\section{Effect of hyperparameter on Unconditional Image generation}
\label{app:lambda}
Here, we show the effect of $\lambda$ for unconditional image generation on CIFAR-10 dataset.

\begin{table}[!h]
  \caption{Effect of $\lambda$ on the FID score for unconditional image generation on CIFAR-10 using Jigsaw as NDA.}
  \label{table:alpha_beta}
  \centering
  \begin{tabular}{llllll}
    \toprule
    $\lambda$ & 1.0 & 0.75 & 0.5 & 0.25 & 0.15\\
    \midrule
    FID & 18.64 & 16.61 & 14.95 & \textbf{12.61} & 13.01\\
    \bottomrule
  \end{tabular}
\end{table}

\section{Unsupervised Learning on Images}

\begin{figure}[!htb]
\centering
    \includegraphics[width=0.98\textwidth]{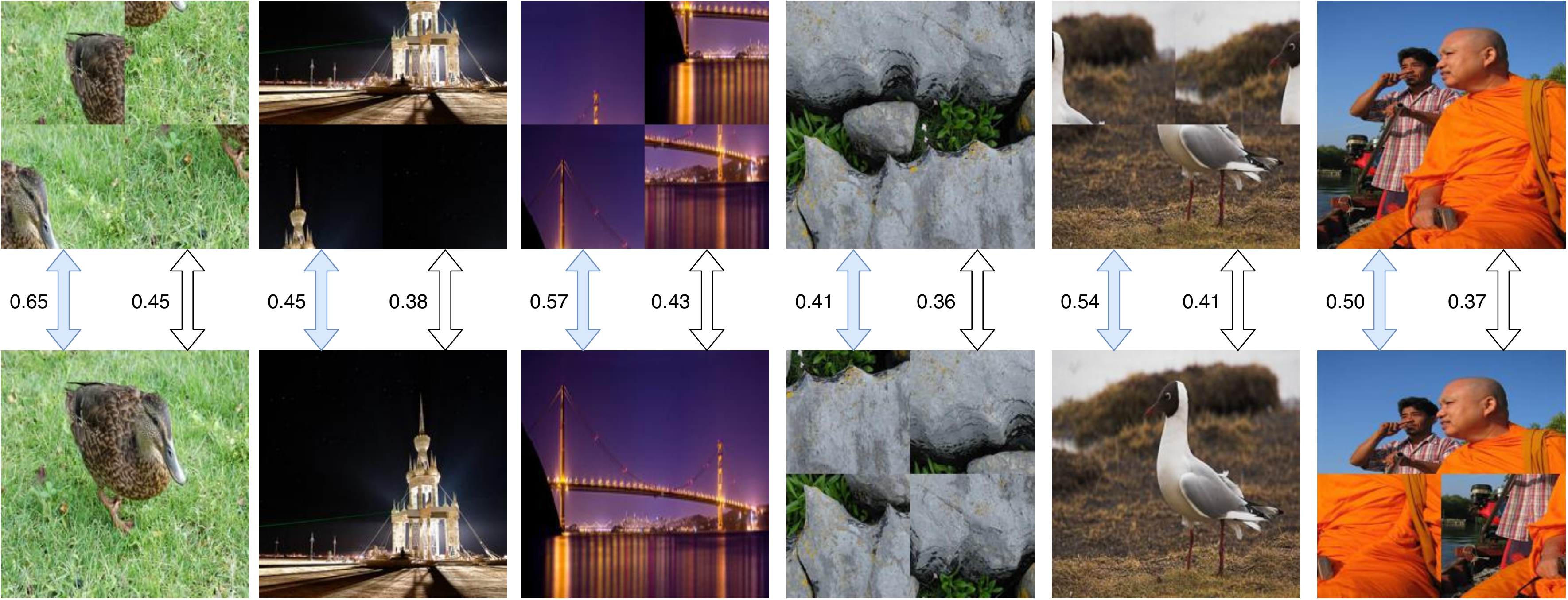}
    \caption{Comparing the cosine distance of the representations learned with Jigsaw NDA and Moco-V2 (\textbf{shaded blue}), and original Moco-V2 (\textbf{white}). With NDA, we project normal and its jigsaw image representations further away from each other than the one without NDA.}
    \label{fig:cosine}
\end{figure}

\section{Dataset Preparation for FID evaluation}
For dataset preparation, we follow the the following procedures: (a) CIFAR-10 contains 60K 32$\times$32
images with 10 labels, out of which 50K are used for training
and 10K are used for testing, (b) CIFAR-100 contains 60K 32 $\times$ 32 images with 100 labels, out of which 50K are used for training and 10K are used for testing, (c)
CelebA contains 162,770 train images and 19,962 test images (we resize the images to 64$\times$64px), (d) STL-10 contains 100K (unlabeled) train images and 8K (labeled) test images (we resize the images to 32$\times$32px). In our experiments the FID is calculated on the test dataset. In particular, we use 10K generated images vs. 10K test images for CIFAR-10, 10K vs. 10K for CIFAR-100, 19,962 vs. 19,962 for CelebA, and 8K vs 8K for STL-10.

\section{Hyperparameters and Network Architecture}
\label{app:archit}
\paragraph{Generative Modeling.} We use the same network architecture in BigGAN~\cite{brock2018large} for our experiments. The code used for our experiments is based over the author's  \href{https://github.com/ajbrock/BigGAN-PyTorch}{PyTorch} code. For CIFAR-10, CIFAR-100, and CelebA we train for 500 epochs whereas for STL-10 we train for 300 epochs. For all the datasets we use the following hyperparameters: batch-size = 64, generator learning rate = 2e-4, discriminator learning rate = 2e-4, discriminator update steps per generator update step = 4. The best model was selected on the basis of FID scores on the test set (as explained above).
% \begin{table}[!h]
%   \caption{BigGAN architecture for 32 x 32. \textit{ch} denotes the channel width multiplier}
%   \label{table:generator_32}
%   \centering
%   \begin{tabular}{l}
%     \toprule
%     \midrule
%     $x \in \bb{R}^{128} \sim \bb{N}(0, 1)$ \\
%     \midrule
%     Linear (128) $\to$ 4 x 4 x 4\textit{ch}  \\
%     \midrule
%     ResBlock up 4\textit{ch} $\to$ 4\textit{ch} \\
%     \midrule
%     ResBlock up 4\textit{ch} $\to$ 4\textit{ch} \\
%     \midrule
%     ResBlock up 4\textit{ch} $\to$ 4\textit{ch} \\
%     \midrule
%     BN, ReLU, 3 x 3 Conv \textit{ch} $\to$ 3 \\
%     \midrule
%     Tanh \\
%     \midrule
%     \bottomrule
%   \end{tabular}
% \end{table}

\paragraph{Momentum Contrastive Learning.}  We use the official \href{https://github.com/facebookresearch/moco}{PyTorch} implementation for our experiments. For CIFAR-10 and CIFAR-100, we perform unsupervised pre-training for 1000 epochs and supervised training (linear classifier) for 100 epochs. For Imagenet-100, we perform unsupervised pre-training for 200 epochs and supervised training (linear classifier) for 100 epochs. For CIFAR-10 and CIFAR-100, we use the following hyperparameters during pre-training: batch-size = 256, learning-date = 0.3, temperature = 0.07, feature dimensionality = 2048. For ImageNet-100 pre-training we have the following: batch-size = 128, learning-date = 0.015, temperature = 0.2, feature dimensionality = 128. During linear classification we use a batch size of 256 for all the datasets and learning rate of 10 for CIFAR-10, CIFAR-100, whereas for ImageNet-100 we use learning rate of 30.

\paragraph{Dense Predictive Coding.} We use the same network architecture and hyper-parameters in DPC \cite{han2019video} for our experiments and use the official \href{https://github.com/TengdaHan/DPC}{PyTorch} implementation. We perform self-supervised training on UCF-101 for 200 epochs and supervised training (action classifier) for 200 epochs on both UCF-101 and HMDB51 datasets.

\section{Code}
The code to reproduce our experiments is given \href{https://anonymous.4open.science/r/99219ca9-ff6a-49e5-a525-c954080de8a7/}{here}.

\end{document}